
\documentclass[10pt,twocolumn,letterpaper]{article}

\usepackage[pagenumbers]{cvpr} 

\definecolor{cvprblue}{rgb}{0.21,0.49,0.74}
\usepackage[pagebackref,breaklinks,colorlinks,allcolors=cvprblue]{hyperref}

\usepackage{booktabs}
\usepackage{arydshln}
\usepackage{amssymb}
\usepackage{multirow}
\usepackage{pifont}
\usepackage{subcaption}
\usepackage{amsmath}
\usepackage{amsthm}
\usepackage{algorithm}
\usepackage{algorithmic}
\usepackage{xcolor}
\usepackage{colortbl} 
\newtheorem{theorem}{Theorem} 

\usepackage{xspace}         


\title{SelecTKD: Selective Token-Weighted Knowledge Distillation for LLMs}

\author{Haiduo Huang\thanks{{\tt\small huanghd@stu.xjtu.edu.cn}},\; Jiangcheng Song,\; Yadong Zhang,\; Pengju Ren\\
Institute of Artificial Intelligence and Robotics,\; Xi'an Jiaotong University\\
}

\begin{document}
\maketitle
\begin{abstract}
    Knowledge distillation (KD) is a standard route to compress Large Language Models (LLMs) into compact students, yet most pipelines uniformly apply token-wise loss regardless of teacher confidence. This indiscriminate supervision amplifies noisy, high-entropy signals and is especially harmful under large teacher-student capacity gaps. We introduce SelecTKD, a plug-and-play \emph{Selective Token-Weighted} distillation framework that shifts the focus from ``how to measure divergence'' to ``where to apply learning''. At each step, the student proposes tokens that are verified by the teacher through a robust \emph{propose-and-verify} procedure with two variants: \emph{greedy Top-$k$} and \emph{non-greedy Spec-$k$}. Accepted tokens receive full loss, while rejected tokens are masked or down-weighted. This objective-agnostic design works with on- and off-policy data, induces an implicit curriculum quantified by Token Acceptance Rate (TAR), and stabilizes optimization. Across instruction following, mathematical reasoning, code generation, and a VLM setting, SelecTKD consistently improves strong baselines and achieves state-of-the-art results for small models—without architectural changes or extra reference models.
\end{abstract}

\section{Introduction}
\label{sec:intro}
Large Language Models (LLMs) have made rapid progress through scaling compute, parameters, and data~\cite{Kaplan2020,hurst2024gpt}, yet their inference-time latency and memory footprint hinder broad deployment~\cite{Xu2024}. Compressing LLMs into compact students without sacrificing capability is thus a central practical challenge. Knowledge distillation (KD)~\cite{Hinton2015} is the de facto approach for transferring competency from a powerful teacher to a smaller student and has enabled strong Small Language Models (SLMs), as illustrated by recent open-source families~\cite{team2025gemma,yang2025qwen3}. However, most KD pipelines apply a uniform token-wise loss, even when the teacher is uncertain, forcing the student to imitate high-entropy predictions. This indiscriminate supervision injects noise, especially under large capacity gaps, and can destabilize optimization.

\begin{figure*}[ht]
  \begin{center}
  \includegraphics[width=0.75\linewidth]{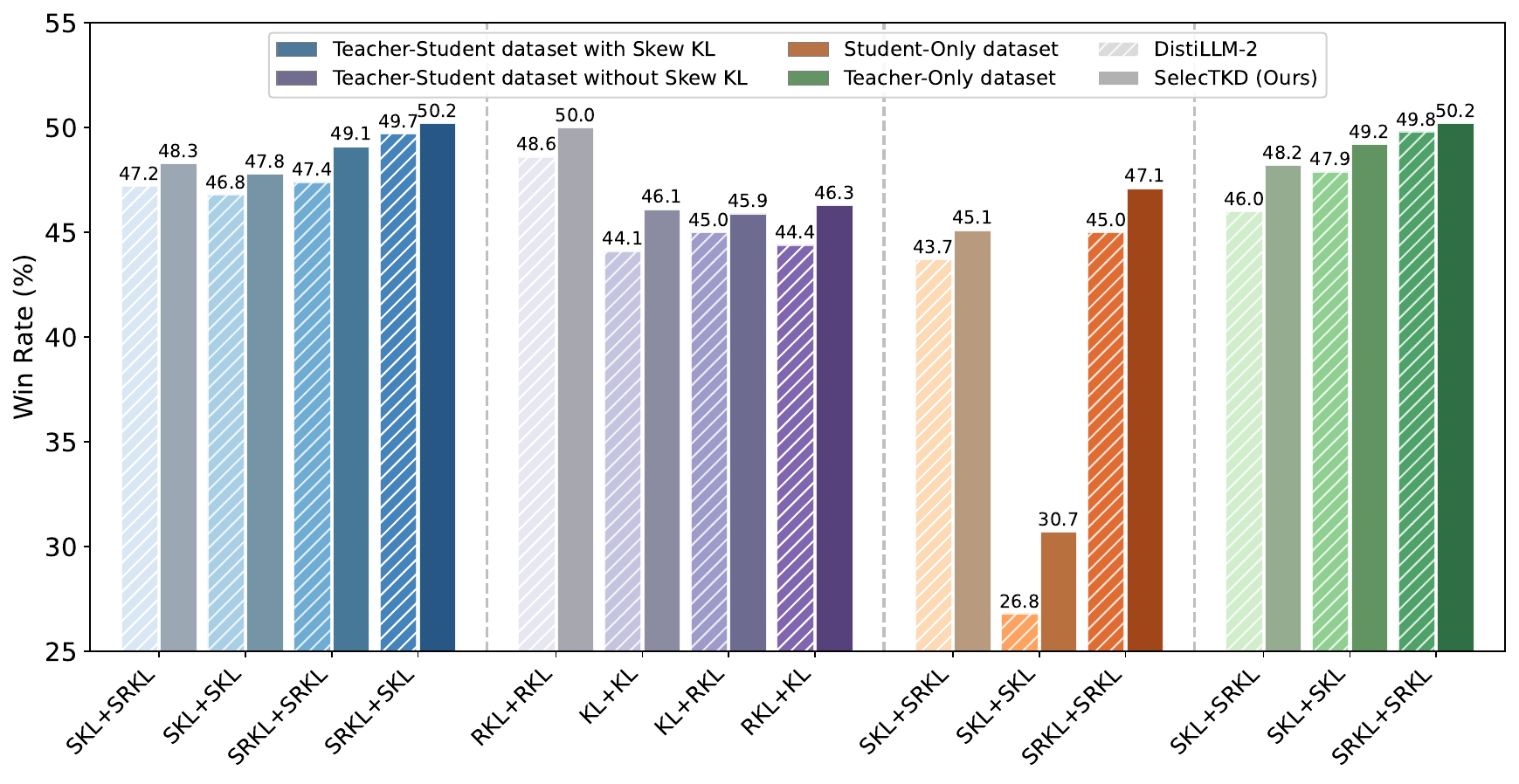}
  \caption{
    Performance comparison of different loss functions and training datasets on DistiLLM-2~\cite{Ko2025} and DistiLLM-2 with SelecTKD, using Qwen2-7B-Inst as the teacher and Qwen2-1.5B as the student. The chart shows Win Rate on the Evol-Instruct benchmark evaluated with GPT-4o. The baseline is gpt-3.5-turbo. This comprehensive comparison highlights the impact of various loss/data combinations and demonstrates the consistent advantage of our SelecTKD method.
  }
  \label{diff_data_loss_performance_comparison}
  \end{center}
  \vspace{-0.5cm}
\end{figure*} 

The literature on LLM distillation mainly spans three axes. (1) Data regimes: dynamic selection and on-/off-policy pipelines mitigate exposure bias~\cite{Zhou2023b,arora2022exposure,Lin2020,Agarwal2024,Ko2024}. (2) Objectives: beyond vanilla KL, reverse/adaptive/skewed divergences and other $f$-divergences are explored~\cite{Gu2023,wen2023f,Wu2024,Ko2024,Jung2025}. (3) Transferred knowledge: multi-granularity semantics and adaptive interpolation address what and when to transfer~\cite{Liu2022d,Wei2024a,Liu2024q,Shing2025}. DistiLLM-2~\cite{Ko2025} further argues that matching losses to data type (e.g., SKL for teacher data, SRKL for student data) is crucial. Yet we ask: is loss geometry the dominant factor?

Motivated by this question, we revisit the role of loss geometry from both empirical and theoretical perspectives. Our preliminary experiments (Figure~\ref{diff_data_loss_performance_comparison}) compare multiple KL-family objectives and loss/data pairings under identical training budgets and data curation, and observe surprisingly similar end performance across choices, even when DistiLLM-2-style asymmetric pairings are used; additional results on other evaluation platforms are provided in the Appendix. This trend echoes the analysis of \cite{Wu2024}: while forward/reverse/skewed KLs can yield different optimization \emph{paths}, they share the same fixed point in the limit (see also our Appendix proofs). In other words, much of the observed difference is attributable to training dynamics rather than the final solution.

This reframes the central question: rather than optimizing only \emph{how} to measure divergence, we should ask {\bf which tokens are worth learning from}. When the teacher-student capacity gap is large~\cite{Shing2025}, uniformly forcing the student to mimic all teacher predictions, including high-entropy or uncertain ones, injects noise and can degrade generalization. We therefore propose to control \emph{where and when} the learning signal is applied at the token level. Inspired by speculative decoding~\cite{leviathan2023fast,chen2023accelerating}, prior works have explored related ideas: SLIM~\cite{Raman2023} reweights losses but still optimizes logits and may lose dark knowledge~\cite{Zhao2022}; SKD~\cite{Xu2024a} improves data quality by interleaved sampling yet modifies the dataset rather than the supervision signal; AdaSPEC~\cite{Hu2025d} filters difficult tokens via a reference model at extra cost. 

In contrast, our {\bf SelecTKD} uses a simple \emph{propose-and-verify} mechanism with token-weighted loss to focus on high-confidence teacher signals. It is plug-and-play, objective-agnostic (KL/RKL/SKL/SRKL), compatible with on- and off-policy regimes, and induces a stable implicit curriculum quantified by Token Acceptance Rate (TAR). Concretely, the student proposes tokens that the teacher verifies via two variants—\emph{greedy Top-$k$} and \emph{non-greedy Spec-$k$}; accepted tokens receive full loss, while rejected tokens are masked or weakly weighted.

Our main {\bf contributions} include: 
(i) We reframe KD from ``how to measure divergence'' to \textit{``where to apply supervision''}, introducing a selective token-weighted mechanism that emphasizes teacher-consistent tokens. 
(ii) We present a robust verification scheme—\emph{greedy Top-$k$} and \emph{non-greedy Spec-$k$}—with a single intuitive hyperparameter $k$ and a rejected-token weight $\beta$.
(iii) We provide theoretical support showing monotonic improvement of TAR and discuss its link to smoother loss landscapes and better generalization. 
(iv) We demonstrate consistent gains across instruction following, math, code, and a VLM setting—achieving SOTA results for small models—while remaining plug-and-play and compatible with KL, RKL, SKL, and SRKL.

\section{Related Work}
\noindent{\bf Knowledge Distillation for LLMs.}
Knowledge distillation (KD) \cite{Hinton2015} compresses LLMs by transferring a strong teacher to a smaller student. Recent advances cluster into two threads: \emph{objectives} and \emph{teaching strategies}. On objectives, MiniLLM \cite{Gu2023} leverages reverse KL, AKL \cite{Wu2024} adaptively blends forward/reverse KL, DistiLLM \cite{Ko2024} introduces skew KL, and DistiLLM-2 \cite{Ko2025} extends it with a contrastive, asymmetric design. On strategies, ImitKD \cite{Lin2020} mitigates exposure bias via on-policy imitation, PromptKD \cite{Kim2024} elicits student-friendly teacher outputs through prompts, and ATKD \cite{Zhong2024} adjusts teaching modes for easy vs. hard tokens under uncertainty. These works primarily optimize how divergence is measured and how data is curated. Our SelecTKD decides \emph{where} to apply supervision by selecting reliable tokens, and composes with the above losses and pipelines to further boost student performance.

\noindent{\bf Token Filtering and Selective Distillation.}
Recent work converges on the view that not all tokens are equally informative. Along the efficiency axis, token dropping in pretraining reduces computation by skipping low-importance tokens~\cite{hou2022token,zhong2023revisiting}. For fine-grained supervision, RHO-1~\cite{Lin2024} formalizes selective language modeling by reweighting losses across tokens, while ENT~\cite{Li2023v} improves robustness by truncating gradients of high-error tokens under noisy supervision. In the fine-tuning and alignment stages, Token Cleaning~\cite{Pang2025} and T-SHIRT~\cite{Fu2025a} apply token-level data selection that outperforms coarse, sample-level filtering; system-level advances such as Collider~\cite{Chai2025} further make token filtering cost-effective on modern hardware.  In the context of distillation, AdaSPEC~\cite{Hu2025d} performs selective alignment between draft and target models, optimizing acceptance rates instead of full-sequence KL. Our approach is complementary: we adopt a simple verification mechanism and apply a token-weighted loss directly during KD, emphasizing high-confidence teacher signals while preserving compatibility with existing objectives and pipelines.

\section{Preliminaries}
\label{subsec:preliminaries}
We formalize knowledge distillation (KD) for autoregressive LLMs and set up token-level divergences used throughout. An autoregressive model generates an output sequence {\small $\boldsymbol{y} = [y_1,\dots,y_{|\boldsymbol{y}|}]$} conditioned on an input {\small $\boldsymbol{x}$}. At step {\small $t$}, it predicts a token from a finite vocabulary {\small $\mathcal{V} = \{v_1,\dots,v_{|\mathcal{V}|}\}$}. The teacher and student define conditional distributions {\small $p(y_t\mid \boldsymbol{x}, \boldsymbol{y}_{<t})$} and {\small $q_\theta(y_t\mid \boldsymbol{x}, \boldsymbol{y}_{<t})$}, respectively.

\noindent{\bf Token-level divergences.}
At each time step {\small $t$} and for each vocabulary token {\small $v_i\in\mathcal{V}$}, we write {\small $p_i := p(v_i\mid \boldsymbol{x}, \boldsymbol{y}_{<t})$} and {\small $q_i := q_\theta(v_i\mid \boldsymbol{x}, \boldsymbol{y}_{<t})$}. The forward and reverse KL contributions at {\small $(t,i)$} are
{\small \begin{equation} \label{eq:fkl_def} D_{\text{FKL}}^{(t,i)}(p, q_\theta) = p_i \, \log \frac{p_i}{q_i}, \qquad D_{\text{RKL}}^{(t,i)}(p, q_\theta) = q_i \, \log \frac{q_i}{p_i}. \end{equation} }

\noindent{\bf Sequence-level objectives.}
Aggregating over steps and vocabulary yields the total forward and reverse KL losses:
{\small
\begin{equation}
\label{eq:fkl_rkl_total}
\mathcal{L}_{\mathrm{FKL}} = \sum_{t=1}^{|\boldsymbol{y}|} \sum_{i=1}^{|\mathcal{V}|} D_{\mathrm{FKL}}^{(t,i)}(p, q_\theta),\:
\mathcal{L}_{\mathrm{RKL}} = \sum_{t=1}^{|\boldsymbol{y}|} \sum_{i=1}^{|\mathcal{V}|} D_{\mathrm{RKL}}^{(t,i)}(p, q_\theta).
\end{equation}
}However, forward KL encourages mass-covering, whereas reverse KL favors mode-seeking~\cite{Gu2023}. To balance these, DistiLLM~\cite{Ko2024} proposed skewed variants, skew KL (SKL) and skew reverse KL (SRKL), which interpolate between teacher and student distributions.
{\small
\begin{align}
    &D_{\text{SKL}}^{(\alpha)}(\boldsymbol{x}, \boldsymbol{y}; p \| q_\theta) = D_{\text{KL}}(\boldsymbol{x}, \boldsymbol{y}; p \| \alpha p + (1-\alpha) q_\theta), \\
    &D_{\text{SRKL}}^{(\alpha)}(\boldsymbol{x}, \boldsymbol{y}; p \| q_\theta) = D_{\text{KL}}(\boldsymbol{x}, \boldsymbol{y}; q_\theta \| (1-\alpha) p + \alpha q_\theta).
\end{align}
}where {\small $\alpha\in[0,1]$} controls the mixing ratio. Building on this, DistiLLM-2~\cite{Ko2025} applies SKL to teacher-generated data and SRKL to student-generated data with a curriculum over skew coefficients. Let {\small $D^{(t)}$} and {\small $D^{(s)}$} denote batches of teacher and student responses. The per-step objective is
{\small
\begin{equation}
\label{eq:distillm2_prelim}
\begin{aligned}
\mathcal{L}_{\text{DistiLLM-2}} 
&= (1-\mu)\, \mathbb{E}_{(\boldsymbol{x}, \boldsymbol{y})\in D^{(t)}} \!\big[D_{\mathrm{SKL}}^{(\alpha_t)}\!(\boldsymbol{x}, \boldsymbol{y}; p\,\|\,q_\theta) \big] \\
&\quad +\; \mu\, \mathbb{E}_{(\boldsymbol{x},\boldsymbol{y})\in D^{(s)}} \!\big[D_{\mathrm{SRKL}}^{(\alpha_s)}\!(\boldsymbol{x}, \boldsymbol{y}; p\,\|\,q_\theta) \big],
\end{aligned}
\end{equation}
}where {\small $\mu \in[0,1]$} trades off teacher vs. student distributions and {\small $\alpha_t, \alpha_s$} are scheduled over training. All objectives are implemented token-wise.

\section{Method}
In this section, we present {\bf SelecTKD}, our proposed framework designed to address the limitations of uniform, indiscriminate loss application in conventional knowledge distillation, as shown in Figure~\ref{SelecTKD:framework}. Drawing inspiration from speculative decoding, SelecTKD introduces a dynamic, token-level verification mechanism that selectively filters the distillation signal, focusing learning on tokens where the student and teacher are sufficiently aligned.

\begin{figure*}[ht]
  \centering
  \includegraphics[width=0.85\linewidth]{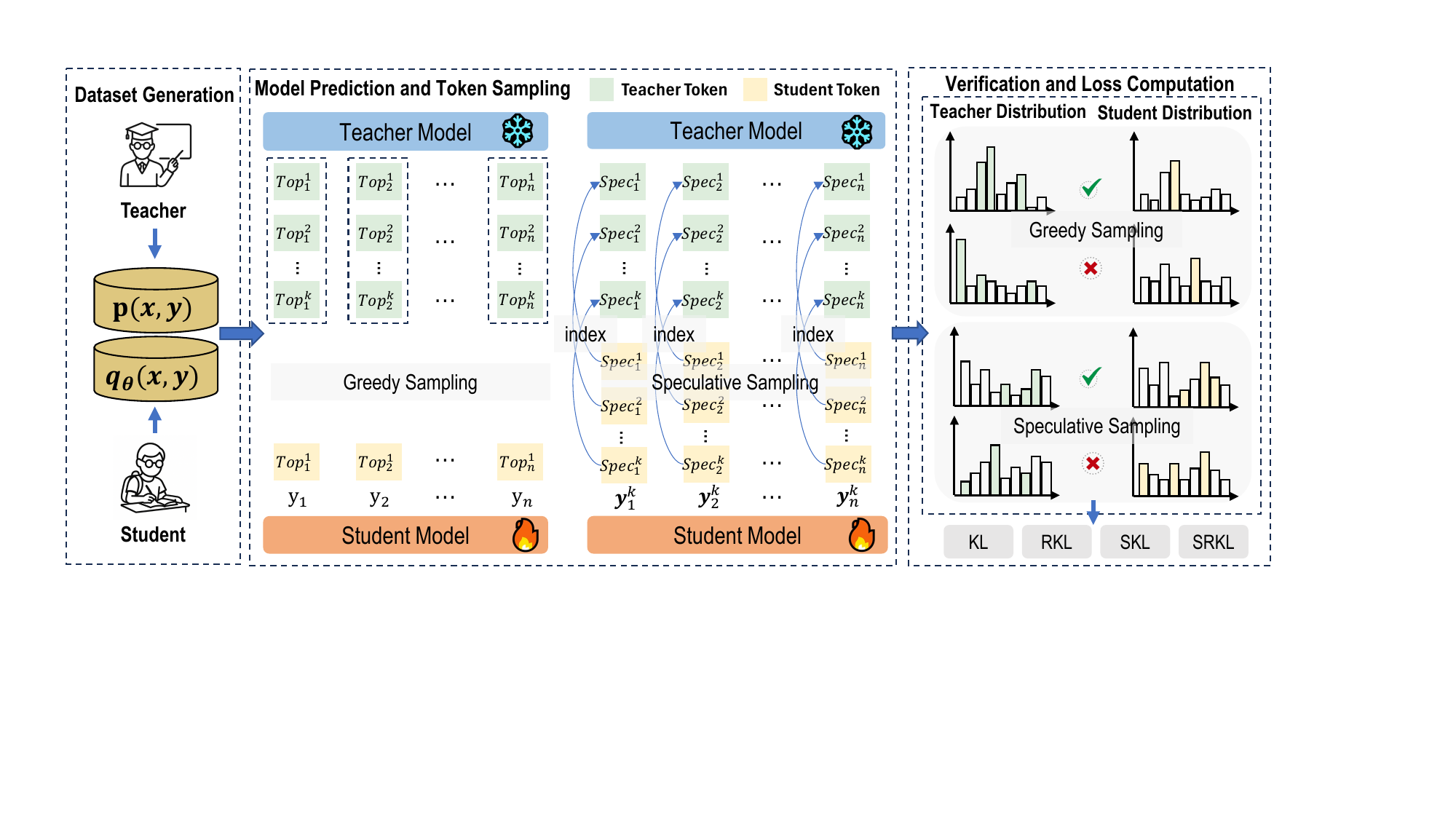}
  \caption{
    Overview of SelecTKD, a ``propose-and-verify'' framework for selective token-level knowledge distillation. 
    Left: {\bf Dataset Generation}. Both teacher $p(\boldsymbol{x},\boldsymbol{y})$ and student $q_\theta(\boldsymbol{x},\boldsymbol{y})$ can produce responses, enabling on- and off-policy data. 
    Middle: {\bf Model Prediction and Token Sampling}. Two variants are supported: 
    (a) \textit{Greedy Top-$k$}: the student proposes a greedy token $\hat y_t$ (argmax) and the teacher returns its Top-$k$ set; 
    (b) \textit{Non-greedy Spec-$k$}: the student samples $k$ candidate tokens and the teacher is queried only on these candidates to compute acceptance indices following speculative sampling~\cite{leviathan2023fast,chen2023accelerating}. 
    Right: {\bf Verification and Loss Computation}. A token is accepted if it is in the teacher's Top-$k$ (greedy) or passes the speculative acceptance test (non-greedy). Accepted tokens receive full loss; rejected tokens are masked or down-weighted by $\beta$. The loss is objective-agnostic and works with KL, RKL, SKL, and SRKL. This design focuses learning on reliable, teacher-aligned tokens and yields a stable, curriculum-like training dynamic.
  }
  \label{SelecTKD:framework}
  \vspace{-0.5cm}
\end{figure*}

Formally, the SelecTKD loss is defined with a generic token-wise divergence {\small $D(\cdot\|\cdot)$} (e.g., KL/RKL/SKL/SRKL):
{\small
\begin{equation}
  \label{eq:selec_tkd_def}
  \mathcal{L}_{\text{SelecTKD}} = \sum_{t=1}^{|\boldsymbol{y}|} V_t \, D\big(p_t \big\| q_t\big)
\end{equation}
}where {\small $p_t = p(\cdot\mid\boldsymbol{x}, \boldsymbol{y}_{<t})$} and {\small $q_t = q_\theta(\cdot\mid\boldsymbol{x}, \boldsymbol{y}_{<t})$} are the teacher and student distributions at position {\small $t$}, and {\small $V_t \in \{0, \beta, 1\}$} is a token-level verification weight determined by a verifier {\small $V(\cdot)$}\footnote{Note, $V(\cdot)$ will pass a detach operation to implement the stop-gradient operator.}. The core innovation of SelecTKD lies in {\small $V_t$}, which adaptively decides whether to apply the distillation loss at each token based on student-teacher alignment.

\subsection{Verification Mechanisms in SelecTKD}
\subsubsection{Hellinger-Based Verification}
An intuitive starting point is to compare the teacher and student probability distributions with the Hellinger distance~\cite{gonzalez2013class}. For a step $t$, let $p_t$ and $q_t$ denote the teacher and student distributions on $\mathcal{V}$. The Hellinger distance is
{\small
\begin{equation}
\label{eq:hellinger_def}
H(p_t,q_t) = \frac{1}{\sqrt{2}}\,\big\|\sqrt{p_t}-\sqrt{q_t}\big\|_2,\quad 0\le H\le 1.
\end{equation}
}It is symmetric and bounded, offering numerical stability. A soft instantiation sets {\small $V_t{=}H(p_t,q_t)$} (``more similar, less learning''), but Hellinger is tail-sensitive and rank-agnostic—small low-probability shifts can change $H$ despite agreement on Top-$k$, misaligning supervision. We therefore adopt a {\bf discrete, rank-based} Top-$k$ verifier that focuses on high-confidence tokens, ignores tail noise and calibration bias, and is computationally simpler. This choice also enables the {\bf Spec-$k$} variant and aligns with our observations of rising TAR and flatter loss landscapes.

\subsubsection{Greedy Top-\texorpdfstring{$k$}{k} Token Verification}
To enhance robustness, we propose a discrete, rank-based verification mechanism. At each step $t$, the student proposes its most likely token:
{\small
\begin{equation}
\hat{y}_t = \arg\max_{y} \, q_\theta(y\mid\boldsymbol{x}, \boldsymbol{y}_{<t})
\end{equation}
}We then check whether this token is among the teacher's top-$k$ candidates:
{\small
\begin{equation}
V_t \;=\; \beta \;+\; (1-\beta)\,\mathbb{I}\!\left( \hat{y}_t \in \mathrm{Top}_k(p_t) \right)
\end{equation}
}where $\mathrm{Top}_k(p_t)$ denotes the set of $k$ tokens with the highest probabilities under the teacher's distribution at position $t$. When $\beta=0$, this reduces to hard gating; small $\beta>0$ provides gentle regularization. This method is robust to the absolute probability values and provides an intuitive knob ($k$) to control the strictness of verification.

\subsubsection{Non-Greedy Spec-\texorpdfstring{$k$}{k} Verification}
To further mitigate exposure bias and align with speculative sampling~\cite{leviathan2023fast}, we verify \emph{multiple} student proposals per step with a rejection-sampling style test.
At each position $t$, the student draws $k$ i.i.d. candidates from its own distribution:  
{\small
\begin{equation}
\tilde{\mathcal{Y}}_t=\big\{y^{(1)}_t,\dots,y^{(k)}_t\big\}\sim \text{i.i.d. } q_\theta(\cdot\mid\boldsymbol{x},\boldsymbol{y}_{<t}).
\end{equation}
}For each sampled token $y^{(j)}_t$, we only query the token and compute the standard speculative-acceptance probability
{\small
\begin{equation}
\label{eq:spec_accept_prob}
a_t^{(j)} \;=\; \min\!\left(1,\; \frac{p_t\!\big(y^{(j)}_t\big)}{q_t\!\big(y^{(j)}_t\big)}\right)
\end{equation}
}Draw $r^{(j)}\!\sim\!\mathrm{U}[0,1]$ and accept the candidate if $r^{(j)}<a_t^{(j)}$.
Let ${\mathcal{A}}_t=\{j\in[1..k]\mid r^{(j)}<a_t^{(j)}\}$ be the set of accepted samples.
We then define the verification weight:
{\small
\begin{equation}
\label{eq:ng_topk_Vt}
V_t = \beta + (1-\beta)\, \mathbb{I}\!\left(|{\mathcal{A}}_t| \ge 1\right)
\end{equation}
}Intuitively, if any of the $k$ proposals are accepted by the teacher under the speculative test, the teacher's local signal is informative and the student should proceed with normal learning; if all are rejected, supervision at this step is likely noisy or misaligned and is therefore down-weighted. This criterion inherits the robustness of speculative sampling, requires $k$ probability lookups at the teacher to score the student's proposals, curriculum-like~\cite{liu2024let} behavior.

\subsection{Token Acceptance Rate (TAR)}
A key property of SelecTKD is its stable and convergent training dynamics, which can be quantified by the \textit{Token Acceptance Rate} (TAR):
{\small
\begin{equation}
\mathrm{TAR} = \mathbb{E}_{(\boldsymbol{x}, \boldsymbol{y})} \left[ \frac{1}{|\boldsymbol{y}|} \sum_{t=1}^{|\boldsymbol{y}|} \mathbb{I}\!\big(V_t=1\big) \right]
\end{equation}
}TAR measures the expected fraction of tokens for which the student's proposal is accepted by the teacher's verifier (i.e., full-weight tokens). We prove (see Appendix, Theorem 1) that, under mild assumptions, SelecTKD guarantees a monotonic increase in TAR during training, reflecting improved student-teacher alignment.

\noindent{\bf The SelecTKD Algorithm.}
We summarize the full SelecTKD training procedure in Algorithm~\ref{alg:SelecTKD}. The core is the dynamic, token-level verification that determines whether to apply the distillation loss at each position. Note that the validation operation for each token position can be batched, and the actual implementation does not require a for loop.

\begin{algorithm}\small
\caption{The training process of SelecTKD}
\label{alg:SelecTKD}
\begin{algorithmic}[1]
\STATE {\bf Input:} Teacher $p$, Student $q_\theta$ (params $\theta$), Top-$k$ or Spec-$k$ value $k$, rejected-token weight $\beta$, verification {\bf mode} $\in\{\texttt{greedy}, \texttt{non-greedy}\}$
\STATE {\bf Output:} Updated student $q_{\hat{\theta}}$
\FOR{each batch $(\boldsymbol{x}, \boldsymbol{y})$ in dataset $\mathcal{D}$}
    \STATE Initialize total loss $\mathcal{L}_{\text{total}} = 0$
    \FOR{each token position $t = 1, \ldots, |\boldsymbol{y}|$}
        \STATE $p_t \leftarrow p(\cdot|\boldsymbol{x}, \boldsymbol{y}_{<t})$, \quad $q_t \leftarrow q_\theta(\cdot|\boldsymbol{x}, \boldsymbol{y}_{<t})$
        \IF{{\bf mode} == \texttt{greedy}}
            \STATE $\hat{y}_t \leftarrow \arg\max_{y} q_t(y)$
            \STATE $V_t \leftarrow \beta + (1-\beta)\cdot \mathbb{I}\!\left(\hat{y}_t \in \mathrm{Top}_k(p_t)\right)$
        \ELSIF{{\bf mode} == \texttt{non-greedy}}
            \STATE Sample $k$ candidates $\{y^{(j)}_t\}_{j=1}^k \sim \text{i.i.d. } q_t$
            \STATE For each $j$: $a_t^{(j)} \leftarrow \min\!\left(1, \frac{p_t(y^{(j)}_t)}{q_t(y^{(j)}_t)}\right)$, draw $r^{(j)}\!\sim\!\mathrm{U}[0,1]$
            \STATE ${\mathcal{A}}_t \leftarrow \{j \in [1..k] \mid r^{(j)} < a_t^{(j)}\}$
            \STATE $V_t \leftarrow \beta + (1-\beta)\cdot \mathbb{I}\!\left(|{\mathcal{A}}_t| \ge 1\right)$
        \ENDIF
        \STATE $L_t \leftarrow V_t \cdot D(p_t \| q_t)$ \COMMENT{$D\in\{$KL, RKL, SKL, SRKL$\}$}
        \STATE $\mathcal{L}_{\text{total}} \leftarrow \mathcal{L}_{\text{total}} + L_t$
    \ENDFOR
    \STATE $\mathcal{L}_{\text{SelecTKD}} \leftarrow \mathcal{L}_{\text{total}} / |\boldsymbol{y}|$
    \STATE Update $\theta$ using $\nabla_\theta \mathcal{L}_{\text{SelecTKD}}$
\ENDFOR
\end{algorithmic}
\end{algorithm}

\section{Experiments}
\label{sec:experiments}
\subsection{Experiment Setup in LLMs}
All experiments use the Hugging Face TRL framework with LoRA~\cite{Ko2025} on 8 NVIDIA A100 80GB GPUs. Unless otherwise noted, SelecTKD uses $k = 5$ and $\beta = 0.01$. We run 2 epochs for instruction/code and 1 epoch for math, with early stopping on a held-out validation set. LoRA ranks and target modules are fixed across methods for fairness.\\
\noindent {\bf General Instruction-Following.}
We randomly sample 50k prompts from UltraChat200k~\cite{ding2023enhancing} to construct the training set. Three teacher-student pairs are evaluated: Qwen2-7B-Instruct~\cite{hui2024qwen2} $\rightarrow$ Qwen2-1.5B-Instruct, Mistral-7B-Instruct~\cite{MistralAI2023} $\rightarrow$ Danube2-1.8B~\cite{singer2024h2o}, and Gemma-2-9B-IT~\cite{team2024gemma} $\rightarrow$ Gemma-2-2B-IT. Performance is assessed on AlpacaEval~\cite{dubois2024length}, Evol-Instruct~\cite{xu2024wizardlm}, and UltraFeedback~\cite{cui2023ultrafeedback}, following the LLM-as-a-Judge~\cite{zheng2023judging} protocol with GPT-4o or GPT-4o-mini as judges to report win rates. We reserve 1k held-out prompts from the 50k for validation and use the remainder for training. For evaluation sets, we follow DistiLLM-2~\cite{Ko2025} splits and apply de-duplication against our training sources by prompt hashing.

\noindent {\bf Mathematical Reasoning.}
We evaluate on GSM8K~\cite{cobbe2021training} and MATH~\cite{hendrycks2021measuring} benchmarks using Qwen2-Math-7B-Inst and Qwen2.5-Math-7B-Inst as teachers, and Qwen2-Math-1.5B and Qwen2.5-Math-1.5B as students. Each student is first supervised fine-tuned on the full MetaMathQA dataset for one epoch, then distilled with 50k samples from MetaMathQA~\cite{yu2023metamath}. Pass@1 accuracy is used for evaluation.

\noindent{\bf Code Generation.}
For code generation, we use WizardCoder prompts (Evol-Instruct) for training. Teacher-student pairs include Qwen2.5-Coder-7B-Inst $\rightarrow$ Qwen2.5-Coder-1.5B and DeepSeek-Coder-6.7B-Inst $\rightarrow$ DeepSeek-Coder-1.3B. Students are trained for 2 epochs and evaluated on HumanEval and MBPP using Pass@1 accuracy.

\noindent {\bf Reporting.}
Following DistiLLM-2~\cite{Ko2025}, all win rates and accuracy scores are averaged over 3 seeds. We report 95\% confidence intervals from 10{,}000 bootstrap resamples stratified by task. For LLM-as-a-Judge, we use randomized, pairwise side-by-side comparisons and tie-aware scoring, excluding ties from win rate calculation. Judge prompts and system messages are fixed across all methods.

\noindent{\bf Baselines.}
We compare SelecTKD with a range of representative knowledge distillation methods:
{\bf Vanilla KD:} Standard supervised distillation on teacher token distributions.
{\bf SeqKD:} Supervised training on full teacher-generated sequences.
{\bf ImitKD:} Hybrid on-policy distillation using a mix of ground-truth and student outputs.
{\bf GKD:} On-policy distillation with student-generated data and alternative divergences.
{\bf DistiLLM:} Framework with skew KLD loss and adaptive off-policy reuse of student data.
{\bf SKD:} Speculative decoding-inspired method where the teacher corrects low-quality student tokens to build a better training set.
{\bf DistiLLM-2:} Contrastive method applying asymmetric losses (SKL/SRKL) to teacher and student data.
The training strategy for LLMs is consistent with DistillLLM-2~\cite{Ko2025}.

\subsection{Experiment Setup in VLMs}
For SelecTKD-VLM-2B (InternVL2-2B+SelecTKD), we trained on the Mantis-Instruct dataset~\cite{jiang2024mantis}. The training process needs only $3$ epochs, using the AdamW ($\beta _1$=0.9, $\beta _2$=0.95) optimizer with a cosine learning rate scheduler, batch size 128, no warmup. The initial learning rate was set to $8\mathrm{e}{-5}$. In this vision-language model distillation task, we employed InternVL2-8B~\cite{Chen2024} as the teacher model and InternVL2-2B as the student model.

\subsection{Main Results}
\subsubsection{General Instruction-Following in LLMs}
Table~\ref{tab:base} presents the main results on three instruction-following benchmarks. Our proposed SelecTKD framework consistently achieves the highest win rates across all tasks and model pairs, substantially outperforming Vanilla KD and other strong baselines. The improvements are especially pronounced in challenging, open-ended generation scenarios, underscoring the effectiveness of SelecTKD's selective loss filtering in mitigating noisy or uninformative learning signals. Notably, SelecTKD delivers clear gains even when applied on top of advanced distillation methods such as DistiLLM-2 and SKD, demonstrating its generality and robustness for aligning smaller student models with nuanced human preferences.

\begin{table}[htbp]
  \caption{Win rates (WR) on three instruction-following benchmarks. The baseline is \texttt{text-davinci-003} for AlpacaEval and \texttt{gpt-3.5-turbo} for Evol-Instruct and UltraFeedback. Judging is performed by GPT-4o (AlpacaEval, Evol-Instruct) and GPT-4o-mini (UltraFeedback). `\emph{+}Ours' denotes SelecTKD.}
  \label{tab:base}
  \centering
  \resizebox{1.0\linewidth}{!}{
  \begin{tabular}{lcccc}
    \toprule
    & {\bf AlpacaEval} & {\bf Evol-Instruct} & {\bf UltraFeedback} & {\bf AVG.} \\
    & {\bf WR(\%)} & {\bf WR(\%)} & {\bf WR(\%)} & {\bf WR(\%)} \\
    \midrule[0.12em]
    \multicolumn{5}{c}{{\bf Qwen2-7B-Inst ($\mathcal{M}_T$) $\rightarrow$ Qwen2-1.5B ($\mathcal{M}_S$)}} \\
    \hline
    $\mathcal{M}_T$         & 88.41 & 70.70 & 69.25 & 76.12 \\
    $\mathcal{M}_S$         & 51.06 & 18.00 & 21.93 & 30.33 \\
    \hline
    KD                      & 57.49 & 28.23 & 37.86 & 41.19 \\
    \emph{+}Ours            & 59.72 (\textcolor{red}{+2.23}) & 30.35 (\textcolor{red}{+2.12}) & 39.93 (\textcolor{red}{+2.07}) & 43.33 (\textcolor{red}{+2.14}) \\
    SeqKD                   & 58.02 & 29.11 & 38.35 & 41.83 \\
    \emph{+}Ours            & 60.16 (\textcolor{red}{+2.14}) & 31.18 (\textcolor{red}{+2.07}) & 40.77 (\textcolor{red}{+2.42}) & 44.04 (\textcolor{red}{+2.21}) \\
    ImitKD                  & 59.37 & 30.58 & 39.92 & 43.29 \\
    \emph{+}Ours            & 61.16 (\textcolor{red}{+1.79}) & 32.30 (\textcolor{red}{+1.72}) & 41.33 (\textcolor{red}{+1.41}) & 44.93 (\textcolor{red}{+1.64}) \\
    GKD                     & 66.07 & 44.61 & 57.74 & 56.14 \\
    \emph{+}Ours            & 66.95 (\textcolor{red}{+0.88}) & 45.73 (\textcolor{red}{+1.12}) & 58.61 (\textcolor{red}{+0.87}) & 57.10 (\textcolor{red}{+0.96}) \\
    DistiLLM                & 66.30 & 44.61 & 58.18 & 56.36 \\
    \emph{+}Ours            & 67.21 (\textcolor{red}{+0.91}) & 45.72 (\textcolor{red}{+1.11}) & 58.62 (\textcolor{red}{+0.44}) & 57.18 (\textcolor{red}{+0.82}) \\
    SKD                     & 61.52 & 44.95 & 56.82 & 54.43 \\
    \emph{+}Ours            & 63.19 (\textcolor{red}{+1.67}) & 45.97 (\textcolor{red}{+1.02}) & 57.96 (\textcolor{red}{+1.14}) & 55.71 (\textcolor{red}{+1.28}) \\
    DistiLLM-2              & 69.88 & 47.13 & 59.05 & 58.69 \\
    \emph{+}Ours            & {\bf 70.60} (\textcolor{red}{+0.72}) & {\bf 48.27} (\textcolor{red}{+1.14}) & {\bf 59.53} (\textcolor{red}{+0.48}) & {\bf 59.47} (\textcolor{red}{+0.78}) \\
    \midrule[0.12em]
    \multicolumn{5}{c}{{\bf Mistral-7B-Inst ($\mathcal{M}_T$) $\rightarrow$ Danube2-1.8B ($\mathcal{M}_S$)}} \\
    \hline
    $\mathcal{M}_T$         & 91.92 & 73.51 & 83.59 & 83.01 \\
    $\mathcal{M}_S$         & 48.17 & 12.84 & 20.06 & 27.02 \\
    \hline
    KD                      & 60.21 & 18.23 & 41.56 & 40.00 \\
    \emph{+}Ours            & 62.56 (\textcolor{red}{+2.35}) & 20.21 (\textcolor{red}{+1.98}) & 43.60 (\textcolor{red}{+2.04}) & 42.12 (\textcolor{red}{+2.12}) \\
    SeqKD                   & 59.76 & 18.45 & 42.11 & 40.11 \\
    \emph{+}Ours            & 62.31 (\textcolor{red}{+2.55}) & 20.44 (\textcolor{red}{+1.99}) & 43.89 (\textcolor{red}{+1.78}) & 42.21 (\textcolor{red}{+2.10}) \\
    ImitKD                  & 58.34 & 17.89 & 40.87 & 39.03 \\
    \emph{+}Ours            & 60.90 (\textcolor{red}{+2.56}) & 20.54 (\textcolor{red}{+2.65}) & 43.45 (\textcolor{red}{+2.58}) & 41.63 (\textcolor{red}{+2.60}) \\
    GKD                     & 69.75 & 24.54 & 57.74 & 50.68 \\
    \emph{+}Ours            & 70.40 (\textcolor{red}{+0.65}) & 25.82 (\textcolor{red}{+1.28}) & 58.33 (\textcolor{red}{+0.59}) & 51.52 (\textcolor{red}{+0.84}) \\
    DistiLLM                & 70.16 & 28.78 & 58.18 & 52.37 \\
    \emph{+}Ours            & 71.05 (\textcolor{red}{+0.89}) & 29.95 (\textcolor{red}{+1.17}) & 59.11 (\textcolor{red}{+0.93}) & 53.37 (\textcolor{red}{+1.00}) \\
    SKD                     & 64.58 & 38.87 & 60.04 & 54.50 \\
    \emph{+}Ours            & 66.56 (\textcolor{red}{+1.98}) & {\bf 38.96} (\textcolor{red}{+0.09}) & 60.95 (\textcolor{red}{+0.91}) & 55.49 (\textcolor{red}{+0.99}) \\
    DistiLLM-2              & 74.04 & 32.84 & 62.46 & 56.45 \\
    \emph{+}Ours            & {\bf 74.69} (\textcolor{red}{+0.65}) & 33.78 (\textcolor{red}{+0.94}) & {\bf 63.03} (\textcolor{red}{+0.57}) & {\bf 57.17} (\textcolor{red}{+0.72}) \\
    \midrule[0.12em]
    \multicolumn{5}{c}{{\bf Gemma-2-9B-Inst ($\mathcal{M}_T$) $\rightarrow$ Gemma-2-2B ($\mathcal{M}_S$)}} \\
    \hline
    $\mathcal{M}_T$         & 95.78 & 88.76 & 85.90 & 90.15 \\
    $\mathcal{M}_S$         & 42.51 & 16.74 & 26.60 & 28.62 \\
    \hline
    KD                      & 61.78 & 32.45 & 54.37 & 49.53 \\
    \emph{+}Ours            & 64.49 (\textcolor{red}{+2.71}) & 35.90 (\textcolor{red}{+3.45}) & 56.79 (\textcolor{red}{+2.42}) & 52.39 (\textcolor{red}{+2.86}) \\
    SeqKD                   & 62.43 & 33.21 & 55.18 & 50.27 \\
    \emph{+}Ours            & 65.14 (\textcolor{red}{+2.71}) & 36.58 (\textcolor{red}{+3.37}) & 57.76 (\textcolor{red}{+2.58}) & 53.16 (\textcolor{red}{+2.89}) \\
    ImitKD                  & 63.12 & 31.89 & 53.92 & 49.64 \\
    \emph{+}Ours            & 65.48 (\textcolor{red}{+2.36}) & 35.32 (\textcolor{red}{+3.43}) & 57.02 (\textcolor{red}{+3.10}) & 52.61 (\textcolor{red}{+2.97}) \\
    GKD                     & 81.43 & 50.57 & 77.20 & 69.73 \\
    \emph{+}Ours            & 82.37 (\textcolor{red}{+0.94}) & 51.91 (\textcolor{red}{+1.34}) & 78.91 (\textcolor{red}{+1.71}) & 71.06 (\textcolor{red}{+1.33}) \\
    DistiLLM                & 82.95 & 51.26 & 76.68 & 70.30 \\
    \emph{+}Ours            & 84.08 (\textcolor{red}{+1.13}) & 52.44 (\textcolor{red}{+1.18}) & 77.83 (\textcolor{red}{+1.15}) & 71.45 (\textcolor{red}{+1.15}) \\
    SKD                     & 78.45 & 57.11 & 72.21 & 69.26 \\
    \emph{+}Ours            & 79.93 (\textcolor{red}{+1.48}) & 58.53 (\textcolor{red}{+1.42}) & 73.70 (\textcolor{red}{+1.49}) & 70.72 (\textcolor{red}{+1.46}) \\
    DistiLLM-2              & 85.97 & 59.53 & 78.99 & 74.83 \\
    \emph{+}Ours            & {\bf 86.96} (\textcolor{red}{+0.99}) & {\bf 60.40} (\textcolor{red}{+0.87}) & {\bf 79.82} (\textcolor{red}{+0.83}) & {\bf 75.73} (\textcolor{red}{+0.90}) \\
    \bottomrule
  \end{tabular}
  }
\end{table}

\subsubsection{Mathematical Reasoning in LLMs}
As shown in Table~\ref{tab:combined-results} (left), SelecTKD delivers consistent and significant improvements on mathematical reasoning benchmarks across all teacher-student setups. Specifically, with Qwen2-Math-7B-Inst as teacher and 1.5B as student, SelecTKD improves Pass@1 on GSM8K from 76.27 to 77.82 (+1.55), and on MATH from 35.58 to 36.89 (+1.31), resulting in an overall average gain from 55.93 to 57.36 (+1.43) compared to DistiLLM-2. Similar trends are observed with Qwen2.5-Math-7B-Inst, where SelecTKD raises the GSM8K Pass@1 from 81.20 to 82.45 (+1.25) and MATH from 42.94 to 43.92 (+0.98), achieving an average improvement of +1.12. Notably, this consistent gain is observed under multiple baselines (e.g., GKD, DistiLLM), highlighting that the implicit curriculum and token filtering of SelecTKD allow the student to focus on high-confidence, learnable tokens. This targeted supervision not only prevents the student from overfitting to noisy signals but also leads to improved multi-step reasoning capabilities, especially on challenging tasks like MATH.

\subsubsection{Code Generation in LLMs}
Table~\ref{tab:combined-results} (right) summarizes results on code generation benchmarks. SelecTKD achieves the highest Pass@1 accuracy on both HumanEval and MBPP for all considered model pairs. For example, using DS-Coder-6.9B-Inst and Qwen2.5-Coder-7B-Inst as teachers, SelecTKD brings HumanEval Pass@1 from 59.92 to 61.72 (+1.80) and MBPP from 75.66 to 76.27 (+0.61) for DistiLLM-2, with an overall average increase from 67.79 to 69.00 (+1.21). Similarly, on Qwen2.5-Coder-7B-Inst, HumanEval improves from 42.24 to 43.63 (+1.39), and MBPP from 62.70 to 64.32 (+1.62). These results demonstrate the superiority of selective loss masking for code tasks, where precise syntax and logical accuracy are critical. By focusing optimization on confident teacher predictions, SelecTKD facilitates more robust and stable acquisition of programming structures, surpassing all previous baselines on Pass@1.

\definecolor{mygray}{gray}{0.92}
\begin{table*}[htbp]\small
  \caption{Combined results on Math and Code benchmarks. The best {\bf Pass@1} score is highlighted in bold. `\emph{+}Ours' denotes SelecTKD.}
  \label{tab:combined-results}
  \centering
  \resizebox{0.9\linewidth}{!}{
  \begin{tabular}{lccc|ccc|ccc|ccc}
  \toprule[0.1em]
  \multirow{8}{*}{{\bf Method}} & \multicolumn{6}{c|}{{\bf Math Benchmarks}} & \multicolumn{6}{c}{{\bf Code Benchmarks}} \\
  \cmidrule(){2-13}
    & \multicolumn{3}{c|}{{\bf Qwen2-Math-7B-Inst $\rightarrow$ 1.5B}} & \multicolumn{3}{c|}{{\bf Qwen2.5-Math-7B-Inst $\rightarrow$ 1.5B}}
    & \multicolumn{3}{c|}{{\bf DS-Coder-6.9B-Inst $\rightarrow$ 1.3B}} & \multicolumn{3}{c}{{\bf Qwen2.5-Coder-7B-Inst $\rightarrow$ 1.5B}} \\
    \cmidrule(){2-13}
    & {\bf GSM8K} & {\bf MATH} & {\bf AVG.} & {\bf GSM8K} & {\bf MATH} & {\bf AVG.} & {\bf HEval}  & {\bf MBPP} & {\bf AVG.} & {\bf HEval}  & {\bf MBPP} & {\bf AVG.} \\
    & {\bf Pass@1} & {\bf Pass@1} & {\bf Pass@1} & {\bf Pass@1} & {\bf Pass@1} & {\bf Pass@1} & {\bf Pass@1} & {\bf Pass@1} & {\bf Pass@1} & {\bf Pass@1} & {\bf Pass@1} & {\bf Pass@1} \\
    \midrule
    $\mathcal{M}_{T}$    & 83.93 & 41.28 & 62.61 & 89.31 & 44.82 & 67.07 & 85.37 & 82.54 & 83.96 & 75.61 & 74.60 & 75.11 \\
    $\mathcal{M}_{S}$    & 74.53 & 25.56 & 50.05 & 77.33 & 27.14 & 52.24 & 50.61 & 72.22 & 61.42 & 30.73 & 60.84 & 45.79 \\
    \midrule[0.05em]
    {\bf GKD}                              & 75.44 & 34.16 & 54.80 & 80.21 & 40.54 & 60.38 & 54.88 & 74.34 & 64.61 & 40.85 & 61.90 & 51.38 \\
    \rowcolor{mygray} \emph{+}Ours         & 76.86 & 35.43 & 56.15 & 82.10 & 42.31 & 62.21 & 56.23 & 76.08 & 66.16 & 42.32 & 63.67 & 53.00  \\
      {\bf DistiLLM}                       & 75.59 & 34.54 & 55.07 & 81.05 & 41.14 & 61.10 & 53.65 & 74.34 & 64.00 & 39.63 & 62.17 & 50.90 \\
    \rowcolor{mygray} \emph{+}Ours         & 77.38 & 35.92 & 56.65 & 82.23 & 42.20 & 62.22 & 55.18 & 75.92 & 65.55 & 41.61 & 63.89 & 52.75 \\
      {\bf DistiLLM-2}                     & 76.27 & 35.58 & 55.93 & 81.20 & 42.94 & 62.07 & 59.92 & 75.66 & 67.79 & 42.24 & 62.70 & 52.47 \\
    \rowcolor{mygray} \emph{+}Ours         & {\bf 77.82} & {\bf 36.89} & {\bf 57.36} & {\bf 82.45} & {\bf 43.92} & {\bf 63.19} & {\bf 61.72} & {\bf 76.27} & {\bf 69.00} & {\bf 43.63} & {\bf 64.32} & {\bf 53.98} \\
    \bottomrule[0.1em]
  \end{tabular}
  }
\end{table*}

\subsubsection{Comprehensive Evaluation in VLMs}
To further validate SelecTKD, we extend it to vision-language models by distilling InternVL2-2B~\cite{Chen2024} and evaluating the resulting SelecTKD-VLM-2B on the Open VLM Leaderboard~\cite{2023opencompass}. As summarized in Table~\ref{tab:open_vlm}, SelecTKD-VLM-2B surpasses all existing models up to 4B parameters, outperforming larger models like Phi-3-Vision (4.2B) and TAID-VLM-2B. Notably, it achieves top or near-top accuracy on nearly all benchmarks, especially excelling on MMStar and MMBench\_V11. These results demonstrate the scalability and effectiveness of SelecTKD for compact, high-performing VLMs, highlighting its strong multimodal knowledge transfer capabilities.

\begin{table*}[t]
  \centering
  \caption{Performance of SelecTKD-VLM-2B, our new state-of-the-art VLM for models up to 4B parameters.}
  \label{tab:open_vlm}
  \resizebox{0.95\linewidth}{!}{
  \begin{tabular}{lcccccccc|c}
      \toprule
      {\bf Model} & {\bf MMBench\_V11} & {\bf MMStar} & {\bf MMMU\_VAL} & {\bf MathVista} & {\bf OCRBench} & {\bf AI2D} & {\bf HallusionBench} & {\bf MMVet} & {\bf Average} \\
      \midrule
      PaliGemma-3B-mix-448~\cite{Beyer2024}   & 65.6      & 48.3      & 34.9       & 28.7       & 61.4      & 68.3       & 32.2      & 33.1       & 46.6 \\
      MiniCPM-V-2~\cite{Yao2024}              & 65.8      & 39.1      & 38.2       & 39.8       & 60.5      & 62.9       & 36.1      & 41.0       & 47.9 \\
      Phi-3-Vision~\cite{Abdin2024}           & 65.2      & 47.7      & {\bf 46.1} & 44.6       & 63.7      & {\bf 78.4} & 39.0      & 44.1       & 53.6 \\        
      InternVL2-2B~\cite{Chen2024}            & 69.6      & 49.8      & 36.3       & 46.0       & 78.1      & 74.1       & 38.0      & 39.7       & 54.0 \\
      TAID-VLM-2B~\cite{Shing2025}            & 70.7      & 49.5      & 35.1       & 51.6       & 78.6      & 74.0       &{\bf 56.8} & 35.1       & 56.4 \\
      \rowcolor{mygray} SelecTKD-VLM-2B(Ours) &{\bf 71.3} &{\bf 50.2} & 39.8       & {\bf 52.6} &{\bf 79.5} & 77.3       & 46.5      & {\bf 45.8} & {\bf 57.9} \\
      \bottomrule
  \end{tabular} 
  } 
\end{table*}

\subsection{Ablation Studies}
\subsubsection{Effect of the Verification Mechanisms}
We conduct a comprehensive ablation to evaluate the effect of different verification mechanisms within the SelecTKD framework, as summarized in Table~\ref{tab:ablation_combined}. Compared to the non-selective Vanilla KD baseline, all selective schemes yield superior average win rates, demonstrating the crucial role of targeted loss application. Among the studied approaches, token-based selection strategies provide more robust improvements over the Hellinger-based criterion. Notably, the Non-Greedy Spec-$k$ strategy consistently achieves the highest win rates, suggesting that allowing for flexible acceptance rather than strict selection can further enhance student learning by embracing a broader range of valid learning signals.

\subsubsection{Impact of Hyperparameters \texorpdfstring{$k$}{k} and \texorpdfstring{$\beta$}{beta}}
We further analyze the influence of the key hyperparameters $k$ (the number of candidates) and $\beta$ (the weighting for rejected tokens). As reported in Table~\ref{tab:ablation_combined}, performance is not highly sensitive within a reasonable range of $k$, but peaks at $k = 5$. Adjusting $\beta$ reveals that while hard gating ($\beta=0$) is already effective, introducing a small non-zero value ($\beta=0.01$) consistently improves results. This indicates that weakly training on rejected tokens serves as a beneficial regularizer, likely providing implicit exposure to valuable negative information. Conversely, increasing $\beta$ further erodes the advantages of selective learning and causes performance to approach that of Vanilla KD. Overall, the best results are obtained with $k=5$ and $\beta=0.01$.

\begin{table}[htbp]\small
  \caption{
    Comprehensive ablation on verification mechanisms, selection types, and key hyperparameters $k$ and $\beta$ for SelecTKD. Results are reported as average win rate (\%) on instruction-following benchmarks (Qwen2-7B-Inst $\rightarrow$ Qwen2-1.5B).
  }
  \label{tab:ablation_combined}
  \centering
  \resizebox{1.0\linewidth}{!}{
  \begin{tabular}{lccccl}
    \toprule
    {\bf Method}              & {\bf Verification}   & {$k$}    & {$\beta$}  & {\bf Avg. Win Rate (\%)} \\
    \midrule
    Vanilla KD                & None                 & --       & 1          & 41.19   \\
    SelecTKD (H)              & Hellinger-based      & --       & --         & 41.70   \\
    SelecTKD (G)              & Greedy Top-$k$       & 3        & 0.01       & 42.98   \\
    SelecTKD (G)              & Greedy Top-$k$       & 5        & 0.01       & 43.09   \\
    SelecTKD (G)              & Greedy Top-$k$       & 7        & 0.01       & 43.04   \\
    SelecTKD (G)              & Greedy Top-$k$       & 5        & 0          & 42.27   \\
    \rowcolor{mygray}
    SelecTKD (NG)             & Non-Greedy Spec-$k$  & 3        & 0.01       & 43.22   \\
    \rowcolor{mygray}
    SelecTKD (NG)             & Non-Greedy Spec-$k$  & 5        & 0.01       & {\bf 43.33} \\
    \rowcolor{mygray}
    SelecTKD (NG)             & Non-Greedy Spec-$k$  & 7        & 0.01       & 43.26   \\
    \rowcolor{mygray}
    SelecTKD (NG)             & Non-Greedy Spec-$k$  & 10       & 0.01       & 43.13   \\
    \rowcolor{mygray}
    SelecTKD (NG)             & Non-Greedy Spec-$k$  & 5        & 0.1        & 43.20   \\
    \rowcolor{mygray}
    SelecTKD (NG)             & Non-Greedy Spec-$k$  & 5        & 0.05       & 43.18   \\
    \rowcolor{mygray}
    SelecTKD (NG)             & Non-Greedy Spec-$k$  & 5        & 0          & 42.83   \\
    \bottomrule
  \end{tabular}
  }
  \vspace{-0.3cm}
\end{table}

\subsubsection{On the Role of Loss Function Geometry}
To investigate whether loss function geometry is the dominant factor in distillation, we experiment with various commonly used distillation losses within the SelecTKD framework. All models are trained on the same mixed dataset generated offline by both teacher and student (see Figure~\ref{fig:SpecKD_performance_analysis} (a)). We observe that while different losses may converge at different rates, their final performance is similar when training is sufficiently long. This supports our hypothesis that, with selective loss application, the choice of loss function is less critical than the control over where and when the learning signal is applied. Losses incorporating Skew KL consistently outperform others, in line with prior findings.

\begin{figure}[ht]
  \centering
  \begin{subfigure}[b]{0.233\textwidth}
    \centering
    \includegraphics[width=\textwidth]{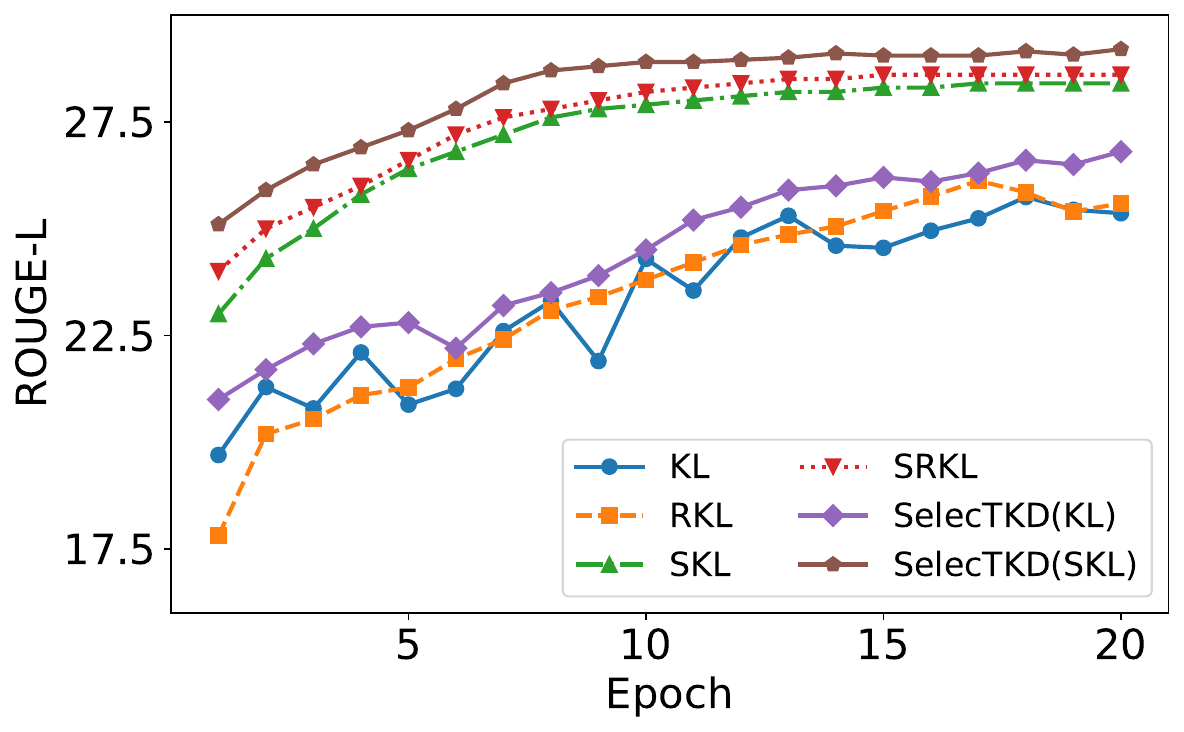}
    \caption{}
  \end{subfigure}
  \hfill
  \begin{subfigure}[b]{0.233\textwidth}
    \centering
    \includegraphics[width=\textwidth]{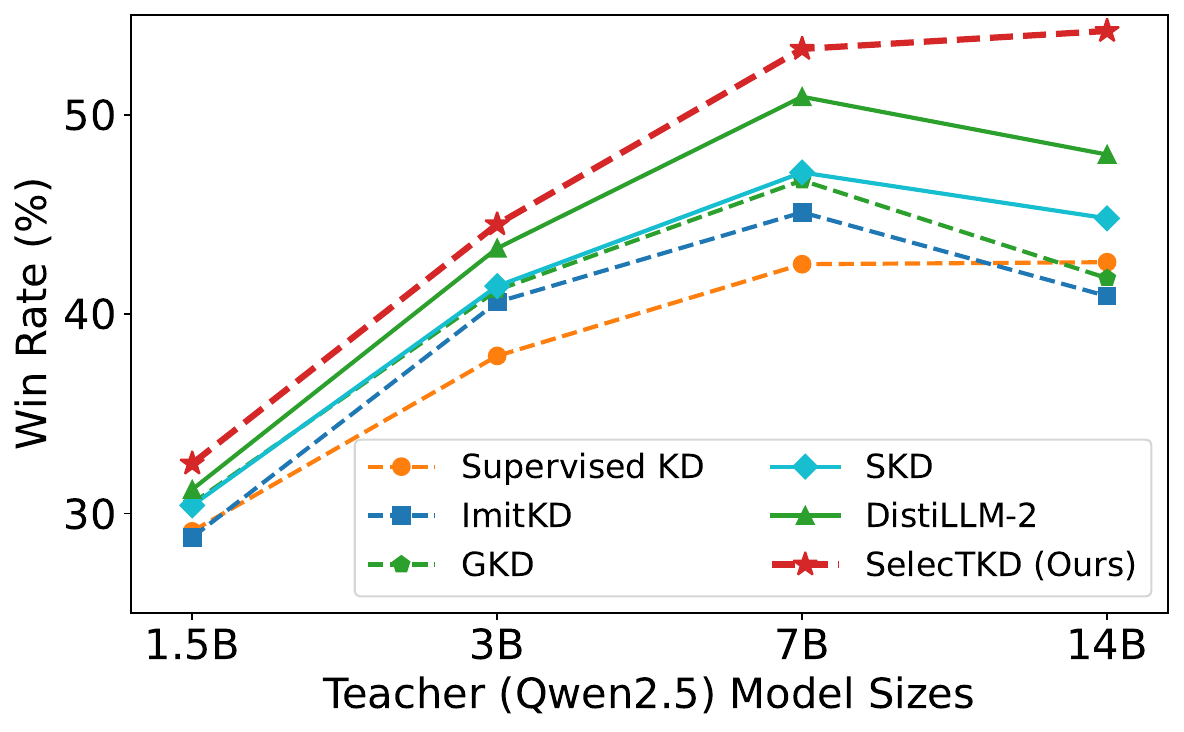}
    \caption{}
  \end{subfigure}
  \caption{Performance analysis of SelecTKD in Qwen2-1.5B: (a) Comparison of different distillation methods during training; (b) Robustness of SelecTKD to increasingly larger teachers.}
  \label{fig:SpecKD_performance_analysis}
  \vspace{-0.3cm}
\end{figure}

\subsubsection{Mitigating the Curse of the Powerful Teacher}
A well-known challenge in knowledge distillation is that using a much larger teacher can sometimes degrade student performance, as noted by~\cite{Zhong2024}. We hypothesize this is due to the higher-entropy, more complex distributions produced by large teachers, which can overwhelm a capacity-limited student. SelecTKD addresses this by filtering out such noisy signals. To validate this, we fix the student (Qwen2-1.5B) and distill from teachers of increasing size. As shown in Figure~\ref{fig:SpecKD_performance_analysis} (b), Vanilla KD performance plateaus and then drops as teacher size increases, confirming the ``curse of the powerful teacher.'' In contrast, SelecTKD enables the student to benefit monotonically from stronger teachers, highlighting the importance of selective loss for robust knowledge transfer from frontier models.

\subsection{Discussion of SelecTKD Optimization}
\subsubsection{Connection to Implicit Curriculum Learning}
SelecTKD naturally induces an implicit, adaptive curriculum~\cite{liu2024let}. As training progresses, the Token Acceptance Rate (TAR) increases quasi-monotonically, see Figure~\ref{fig:SpecKD_optimal_analysis} (a), indicating that the student is mastering ``easy'' tokens first. As the model improves, more challenging tokens are gradually included in the loss, automatically adjusting the learning difficulty to the student's current capability. This self-paced curriculum eliminates the need for manual scheduling and contributes to stable and efficient training.

\begin{figure}[ht]
  \centering
  \begin{subfigure}[b]{0.233\textwidth}
    \centering
    \includegraphics[width=\textwidth]{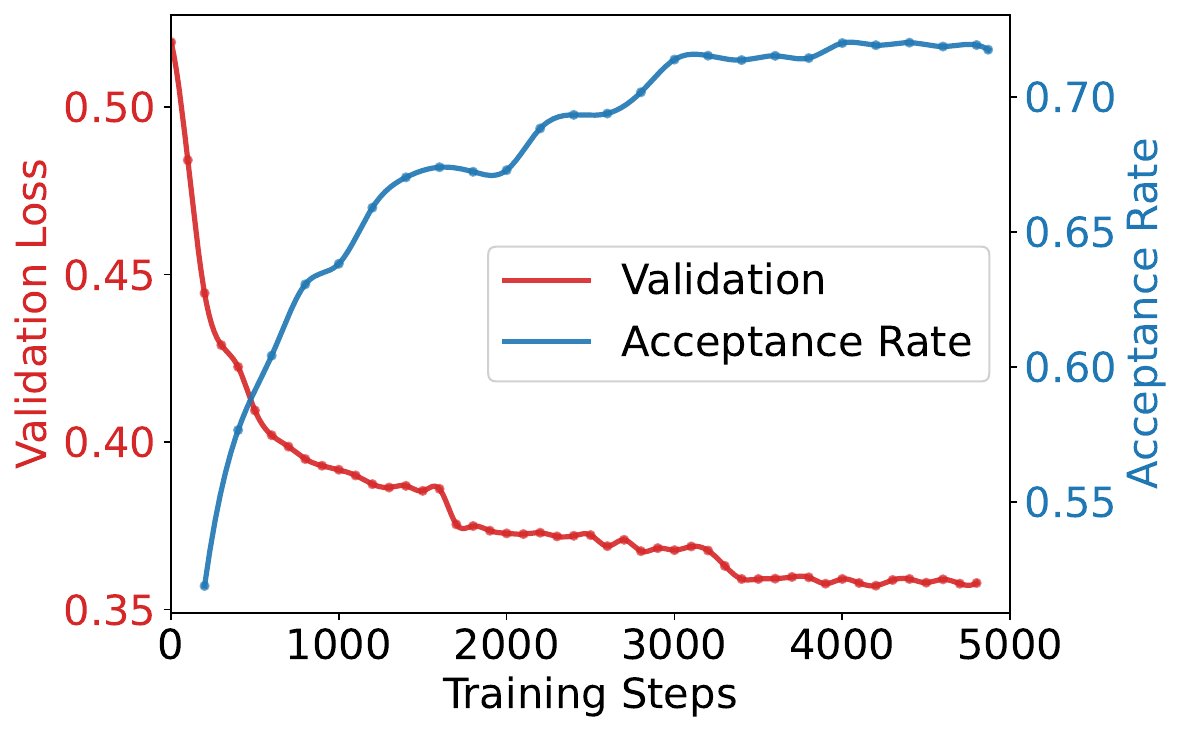}
    \caption{}
  \end{subfigure}
  \hfill
  \begin{subfigure}[b]{0.203\textwidth}
      \centering
      \includegraphics[width=\textwidth]{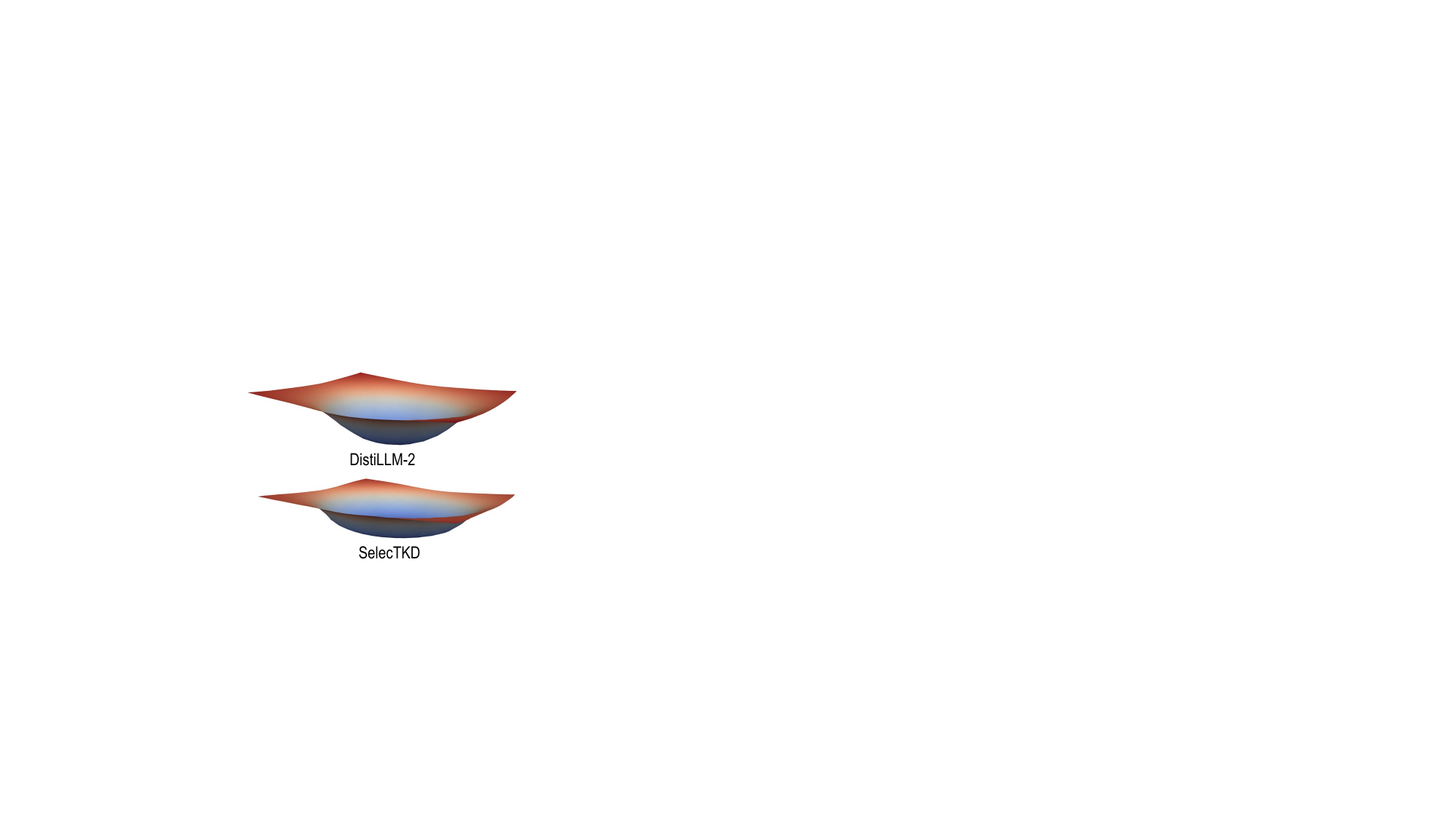}
      \caption{}
  \end{subfigure}
  \caption{Analysis of SelecTKD dynamics: (a) validation loss generally decreases as the Token Acceptance Rate (TAR) increases, indicating an implicit curriculum; (b) SelecTKD yields a flatter loss landscape~\cite{li2018visualizing} than strong baselines (e.g., DistiLLM-2), which correlates with improved generalization.}
  \label{fig:SpecKD_optimal_analysis}
  \vspace{-0.3cm}
\end{figure}

\subsubsection{Impact on Loss Landscape and Generalization}
By masking loss contributions from tokens with high teacher-student divergence, SelecTKD smooths the loss landscape and mitigates the effect of sharp, noisy gradients. As shown in Figure~\ref{fig:SpecKD_optimal_analysis} (b), SelecTKD yields a flatter loss surface compared to strong baselines such as DistiLLM-2. A smoother loss landscape is empirically linked to better generalization~\cite{izmailov2018averaging}, which aligns with the robust performance improvements observed in our experiments.

\subsubsection{Inference Speedup of Speculative Decoding}
We further assess SelecTKD by measuring its impact on speculative decoding, where effective alignment between drafter and verifier models is essential for fast and efficient generation~\citep{Zhou2023}. Table~\ref{tab:speculative} summarizes results using Llama-68m~\citep{miao2024specinfer} as drafter and Phi-3-medium or Phi-3.5-mini~\citep{abdin2024phi} as verifiers. All draft models share the same training data but are distilled via SFT, GKD, DistiLLM, or SelecTKD. We report both average speedup and token acceptance rate as key metrics. Across both verifier settings, SelecTKD consistently achieves the best speedup and acceptance rates---for example, with Phi-3-medium, SelecTKD yields $\times$2.05 speedup and 0.492 acceptance, outperforming other strong baselines, and similar gains are seen with Phi-3.5-mini. These improvements demonstrate that SelecTKD's selective token alignment produces better-matched draft-verifier distributions, directly leading to higher speculative decoding efficiency without compromising acceptance. Notably, the higher acceptance rates attained with SelecTKD indicate improved token-level agreement, implying a narrower drafter-verifier distribution gap compared to prior methods~\citep{Zhou2023}. Thus, targeted token selection and weighting in distillation enable both theoretical and practical advances for inference with modern language models.

\begin{table}[ht]
  \centering
  \caption{Inference speedup (Spd.) and token acceptance rate (Acpt.) in the speculative decoding setting. Results compare different draft models trained via various KD methods for both Phi-3-medium and Phi-3.5-mini verifiers.}
  \label{tab:speculative}
    \resizebox{1.0\linewidth}{!}{
    \begin{tabular}{l|c|cccccc}
      \toprule[0.1em]
      Phi- & & {\bf SFT} & {\bf GKD} & {\bf DistiLLM} & {\bf DistiLLM-2} & {\bf SelecTKD} \\ \midrule
      \multirow{2}{*}{3-medium} & Spd.\,($\uparrow$) & $\times$1.32 & $\times$1.64 & $\times$1.71 & $\times$1.97 & {\bf $\times$2.05} \\
      & Acpt.\,($\uparrow$)                          & 0.412        & 0.464        & 0.469        & 0.487        & {\bf 0.492} \\
      \midrule
      \multirow{2}{*}{3.5-mini} & Spd.\,($\uparrow$) & $\times$1.24 & $\times$1.58 & $\times$1.65 & $\times$1.84 & {\bf $\times$1.91} \\
      & Acpt.\,($\uparrow$)                          & 0.397        & 0.443        & 0.452        & 0.522        & {\bf 0.534} \\
      \bottomrule[0.1em]
    \end{tabular}}
    \vspace{-0.3cm}
\end{table}

\section{Limitations and Future Work}
Firstly, our experiments evaluate SelecTKD with LLM teachers of up to 9B parameters and an 8B VLM teacher, scaling up to frontier-scale models remains an open challenge. Additionally, we currently use a fixed Top-$k$ and a global rejected-token weight $\beta$; adaptively learning $k$/$\beta$, or conditioning them on token- or context-level uncertainty, may further enhance stability and efficiency. While our study covers single-turn, text-centric tasks and a vision-language scenario, extending selective token-weighted distillation to multi-image/video, speech, or preference-aligned pipelines is promising. Overall, SelecTKD reframes distillation from loss engineering to selective supervision, and we expect this principle to underpin more robust and efficient compact models.

\section{Conclusion}
We identify a central limitation of conventional LLM distillation: uniform, indiscriminate token-wise supervision that forces students to imitate high-entropy or noisy teacher predictions, which is especially detrimental when there are large capacity gaps. We introduce SelecTKD, a plug-and-play framework using a simple Top-$k$ propose-and-verify token mechanism—via greedy and non-greedy Spec-$k$ variants—to determine where to apply learning. Accepted tokens receive full loss, while rejected tokens are masked or weakly weighted. SelecTKD is objective-agnostic (KL, RKL, SKL, SRKL) and compatible with on- and off-policy data. Across instruction following, mathematical reasoning, code generation, and a VLM task, SelecTKD consistently improves strong baselines without architectural changes or extra reference models. Analyses of Token Acceptance Rate (TAR) and the resulting smoother loss landscape corroborate an implicit curriculum and stable optimization.

{
    \small
    \bibliographystyle{ieeenat_fullname}
    \bibliography{main}
}

\clearpage
\setcounter{page}{1}
\maketitlesupplementary

\setcounter{section}{0}
\renewcommand{\thesection}{\Alph{section}}
\renewcommand{\thesubsection}{\Alph{section}.\arabic{subsection}}
\renewcommand{\thesubsubsection}{\Alph{section}.\arabic{subsection}.\arabic{subsubsection}}

\section{Additional Results}
\label{sec:additional_results}
As discussed in Section~\ref{sec:intro}, our preliminary experiments suggest that different loss function geometries (e.g., forward/reverse/skewed KL) converge to similar fixed points despite differing optimization dynamics. To further validate this claim and demonstrate its robustness across different evaluation protocols, we conduct comprehensive experiments on multiple evaluation platforms, including DeepSeek, Qwen, OpenAI, and Kimi. These platforms employ distinct judging criteria and model preferences, providing a diverse testbed for assessing the generality of our hypothesis.

\begin{figure}[ht]
  \centering
  \includegraphics[width=1.0\linewidth]{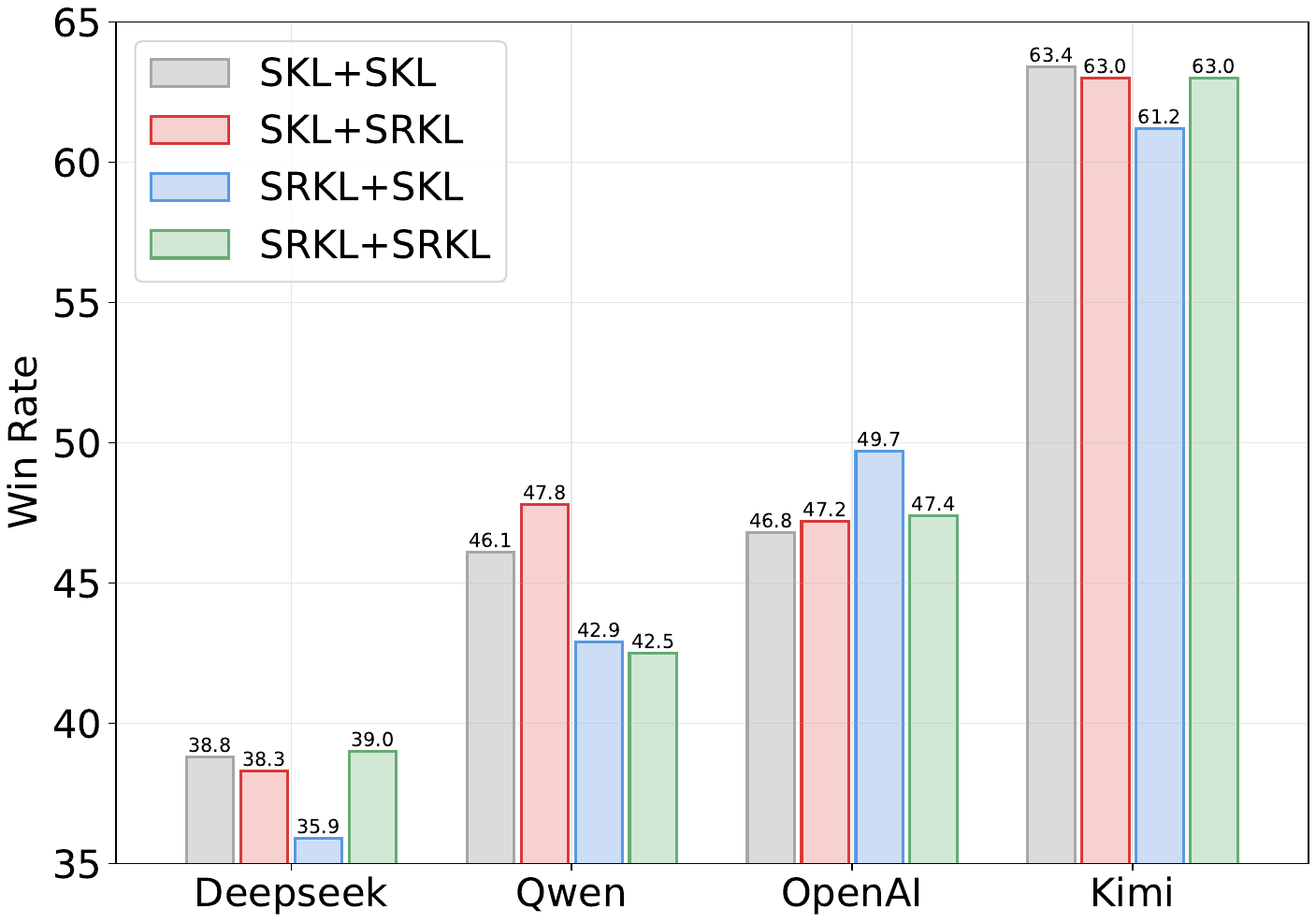}
  \caption{
    Performance comparison of symmetric and asymmetric loss function combinations on Qwen2-7B-Inst $\rightarrow$ Qwen2-1.5B-SFT across four different evaluation platforms. The chart shows the Win Rate of four loss function combinations: two symmetric forms (SKL+SKL, SRKL+SRKL) and two asymmetric forms (SKL+SRKL, SRKL+SKL). Results are evaluated using DeepSeek (deepseek-chat), Qwen (qwen-plus-2025-01-25), OpenAI (gpt-4o), and Kimi (moonshot-v1-8k) as judges. The consistent performance patterns across platforms indicate that loss function geometry is not the dominant factor in determining final student performance, supporting our reframing of the distillation problem from ``how to measure divergence'' to ``where to apply supervision.''
  }
  \label{diff_eval_platform_performance_comparison}
\end{figure}

As illustrated in Figure~\ref{diff_eval_platform_performance_comparison}, the performance differences between symmetric and asymmetric loss combinations remain relatively small across all four evaluation platforms, with variations typically within 1--2\% in win rate. This consistency across diverse judging criteria strengthens our argument that the choice of loss function geometry, while affecting training dynamics, does not fundamentally alter the achievable performance ceiling. Instead, the key insight lies in selectively applying supervision to high-confidence teacher signals, which is the core principle of SelecTKD.

Table~\ref{tab:appendix_ablation_combined} presents the complete ablation study on verification mechanisms and hyperparameters, extending the analysis in Section~\ref{sec:experiments}. The results demonstrate that while different verification strategies (Hellinger-based, Greedy Top-$k$, Non-Greedy Spec-$k$) yield varying degrees of improvement over Vanilla KD, the Non-Greedy Spec-$k$ variant consistently achieves the best performance. Furthermore, the robustness of SelecTKD to hyperparameter choices (particularly $k \in \{3,5,7,10\}$ and $\beta \in \{0,0.01,0.05,0.1\}$) indicates that the framework is practical and easy to deploy without extensive hyperparameter tuning.

\begin{table}[htbp]\small
  \caption{
    Comprehensive ablation on verification mechanisms, selection types, and key hyperparameters $k$ and $\beta$ for SelecTKD. Results are reported as average win rate (\%) on instruction-following benchmarks (Qwen2-7B-Inst $\rightarrow$ Qwen2-1.5B).
  }
  \label{tab:appendix_ablation_combined}
  \centering
  \resizebox{1.0\linewidth}{!}{
  \begin{tabular}{lccccl}
    \toprule
    {\bf Method}              & {\bf Verification}   & {$k$}    & {$\beta$}  & {\bf Avg. Win Rate (\%)} \\
    \midrule
    Vanilla KD                & None                 & --       & 1          & 41.19   \\
    SelecTKD (H)              & Hellinger-based      & --       & --         & 41.70   \\
    SelecTKD (G)              & Greedy Top-$k$       & 3        & 0.01       & 42.98   \\
    SelecTKD (G)              & Greedy Top-$k$       & 5        & 0.01       & 43.09   \\
    SelecTKD (G)              & Greedy Top-$k$       & 7        & 0.01       & 43.04   \\
    SelecTKD (G)              & Greedy Top-$k$       & 10       & 0.01       & 42.95   \\
    SelecTKD (G)              & Greedy Top-$k$       & 5        & 0.1        & 42.82   \\
    SelecTKD (G)              & Greedy Top-$k$       & 5        & 0.05       & 42.86   \\
    SelecTKD (G)              & Greedy Top-$k$       & 5        & 0          & 42.27   \\
    \rowcolor{mygray}
    SelecTKD (NG)             & Non-Greedy Spec-$k$  & 3        & 0.01       & 43.22   \\
    \rowcolor{mygray}
    SelecTKD (NG)             & Non-Greedy Spec-$k$  & 5        & 0.01       & {\bf 43.33} \\
    \rowcolor{mygray}
    SelecTKD (NG)             & Non-Greedy Spec-$k$  & 7        & 0.01       & 43.26   \\
    \rowcolor{mygray}
    SelecTKD (NG)             & Non-Greedy Spec-$k$  & 10       & 0.01       & 43.13   \\
    \rowcolor{mygray}
    SelecTKD (NG)             & Non-Greedy Spec-$k$  & 5        & 0.1        & 43.20   \\
    \rowcolor{mygray}
    SelecTKD (NG)             & Non-Greedy Spec-$k$  & 5        & 0.05       & 43.18   \\
    \rowcolor{mygray}
    SelecTKD (NG)             & Non-Greedy Spec-$k$  & 5        & 0          & 42.83   \\
    \bottomrule
  \end{tabular}
  }
\end{table}

\section{Theoretical Analysis of SelecTKD}
\label{sec:theoretical_analysis}
In this section, we provide a rigorous theoretical analysis of SelecTKD, focusing on the monotonic improvement property of the Token Acceptance Rate (TAR) during training. This analysis establishes a mathematical foundation for the implicit curriculum learning effect observed empirically in our experiments (Section~\ref{sec:experiments}). The following theorem provides a lower bound on the TAR increment per update step, which guarantees that the student-teacher alignment improves monotonically under appropriate conditions.

\begin{theorem}
\label{thm:tar_lower_bound}
Assume a sufficiently small learning rate $\eta > 0$ and that the language model satisfies standard Lipschitz continuity conditions~\cite{cisse2017parseval}. Then, for both SelecTKD variants (Greedy Top-$k$ and Non-Greedy Spec-$k$), each gradient update step improves the Token Acceptance Rate (TAR), with the improvement lower-bounded as:
{\small
\begin{equation}
\label{eq:tar_lower_bound}
\mathrm{TAR}_{t+1} - \mathrm{TAR}_t \geq \eta\,\kappa\,(1-\mathrm{TAR}_t),
\end{equation}
}
where $\kappa > 0$ is a positive constant that reflects the average ease of correcting a rejected proposal. The constant $\kappa$ depends on: (i) the teacher's confidence margin at the Top-$k$ decision boundary, (ii) the model's smoothness properties (Lipschitz constant), and (iii) the gradient magnitude for tokens near the acceptance threshold.
\end{theorem}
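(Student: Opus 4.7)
The plan is to prove Theorem~\ref{thm:tar_lower_bound} by writing the per-step change in TAR as an expectation over the two types of boundary crossings at each token position, arguing the ``accepted$\to$rejected'' term is negligible for small $\eta$, and then lower-bounding the ``rejected$\to$accepted'' term via a probability-mass argument that uses the Lipschitz assumption and the teacher's Top-$k$ margin. Concretely, I would set $A_t(\theta)\in\{0,1\}$ to be the acceptance indicator at a generic position and write
\[
\mathrm{TAR}(\theta)=\mathbb{E}_{(\boldsymbol{x},\boldsymbol{y}),t}\bigl[A_t(\theta)\bigr],
\]
so that one step of SGD gives
\[
\mathrm{TAR}_{t+1}-\mathrm{TAR}_t \;=\; \underbrace{\Pr[A_{t+1}{=}1\mid A_t{=}0]\,(1-\mathrm{TAR}_t)}_{(\mathrm{I})} \;-\; \underbrace{\Pr[A_{t+1}{=}0\mid A_t{=}1]\,\mathrm{TAR}_t}_{(\mathrm{II})}.
\]

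For term $(\mathrm{II})$, I would argue that currently accepted positions contribute full-weight loss that pulls $q_{\theta}$ toward $p$, so by a standard descent-lemma argument the student's argmax (greedy variant) or the per-sample overlap $\sum_y\min(p_t(y),q_\theta(y))$ (non-greedy variant) can only decrease by $O(\eta^2)$ around an accepted configuration. Using the Lipschitz continuity of $q_\theta$ in $\theta$, I would conclude $(\mathrm{II}) = O(\eta^2)$ and hence is dominated by $(\mathrm{I})$ for sufficiently small~$\eta$.

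Term $(\mathrm{I})$ requires separate handling of the two variants. For \emph{Greedy Top-$k$}, a rejected position has $\hat y_t\notin\mathrm{Top}_k(p_t)$, and the loss contribution there is only $\beta\,D(p_t\|q_t)$, but its gradient still has a positive projection onto $\nabla_\theta\!\sum_{v\in\mathrm{Top}_k(p_t)}q_\theta(v)$; a Lipschitz/Taylor estimate together with the teacher's confidence margin at the $k$-th position then yields a one-sided probability of boundary crossing of order $\eta\,c_G$ with $c_G>0$ depending on margin and Lipschitz constant. For \emph{Non-Greedy Spec-$k$}, I would use that the per-sample speculative acceptance probability equals the overlap $\alpha(\theta):=\sum_y\min(p_t(y),q_\theta(y))$, so
\[
\Pr[A_{t+1}{=}1\mid A_t{=}0] \;=\; 1-\bigl(1-\alpha(\theta_{t+1})\bigr)^{k},
\]
and expanding $\alpha(\theta_{t+1})-\alpha(\theta_t)\ge c_N\,\eta$ (which follows because the SelecTKD loss at rejected positions shares parameters with accepted positions and the joint gradient has a positive projection onto $\nabla_\theta\alpha$) gives a lower bound of order $\eta\,k\,(1-\alpha)\,c_N$. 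Averaging either bound over the rejected fraction $(1-\mathrm{TAR}_t)$ and absorbing $\min(c_G,k c_N,\ldots)$ into the constant $\kappa$ recovers the claimed inequality.

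The main obstacle is the non-smoothness hiding behind the indicator $A_t$ (argmax in the greedy case, a step-function threshold in the non-greedy case): pointwise, small parameter perturbations either do nothing or flip the indicator, so one cannot differentiate $A_t$ directly. I would route around this in two ways: first, by replacing $A_t$ with the expected smooth surrogate $\mathbb{E}[A_t]$ (the softmax mass on $\mathrm{Top}_k(p_t)$, respectively the overlap $\alpha$), which is genuinely Lipschitz in $\theta$ under the paper's assumptions; and second, by invoking a uniform confidence-margin assumption at the Top-$k$ boundary so that the fraction of ``borderline'' positions where the flip probability is not $\Omega(\eta)$ has measure zero. Carrying the constants through this smoothing step is what pins down $\kappa$ as a positive function of (i) the teacher margin, (ii) the Lipschitz constant, and (iii) the minimum gradient norm on rejected positions, matching the dependencies stated in the theorem.
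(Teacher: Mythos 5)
Your proposal follows essentially the same route as the paper's proof: the same decomposition of the TAR increment into rejected-to-accepted minus accepted-to-rejected transitions, the same claim that the latter is negligible at small $\eta$, and the same factorization $\Pr(A_{t+1}{=}1,\,A_t{=}0)=(1-\mathrm{TAR}_t)\,\Pr(A_{t+1}{=}1\mid A_t{=}0)$ with the conditional probability lower-bounded by $\eta\kappa$ through the teacher margin, the Lipschitz constant, and the gradient magnitude near the boundary. If anything, your plan is more explicit than the published argument, which does not separate the two variants in the rejected-to-accepted term, does not quantify the reverse transition as $O(\eta^2)$, and does not confront the non-differentiability of the acceptance indicator that you correctly identify as the main obstacle.
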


\begin{proof}
Let $c$ denote a context tuple $(x, y_{<t})$ representing the input and previous tokens. Define the acceptance indicator function at training step $t$ as $A_t(c) = \mathbb{I}\!\big(V_t(c)=1\big)$, where $V_t$ is the SelecTKD verification weight. For the Greedy Top-$k$ variant, $V_t(c)=1$ if and only if $\arg\max_{y} q_\theta(y|c) \in \mathrm{Top}_k(p(\cdot|c))$; for the Non-Greedy Spec-$k$ variant, $V_t(c)=1$ if and only if at least one of the $k$ sampled candidates from $q_\theta(\cdot|c)$ passes the speculative acceptance test. The Token Acceptance Rate at step $t$ is defined as $\mathrm{TAR}_t = \mathbb{E}_c[A_t(c)]$, where the expectation is taken over the data distribution.

The change in TAR after one gradient update step is:
{\small
\begin{equation}
\label{eq:tar_change}
\mathrm{TAR}_{t+1} - \mathrm{TAR}_t = \mathbb{E}_c[A_{t+1}(c) - A_t(c)].
\end{equation}
}
Under the assumption of a sufficiently small learning rate $\eta$ and smooth model dynamics, the probability of an accepted token becoming rejected is negligible compared to the reverse transition. This follows from the fact that gradient updates are local and smooth, and accepted tokens are already within the teacher's high-confidence region. Therefore, the TAR increment is dominated by tokens that transition from rejected to accepted:
{\small
\begin{multline}
\label{eq:tar_dominated}
\mathrm{TAR}_{t+1} - \mathrm{TAR}_t 
\approx \Pr(A_{t+1}=1,\,A_t=0) 
\\
- \underbrace{\Pr(A_{t+1}=0,\,A_t=1)}_{\approx 0} 
\geq \Pr(A_{t+1}=1,\,A_t=0).
\end{multline}
}
Let $B_t = \{c: A_t(c) = 0\}$ denote the set of rejected contexts at step $t$. The probability of a context being rejected is $\Pr(c \in B_t) = 1 - \mathrm{TAR}_t$. The improvement in TAR comes from tokens in $B_t$ that become accepted after the update:
{\small
\begin{multline}
\label{eq:tar_improvement}
\Pr(A_{t+1}=1,\,A_t=0) = \Pr(c \in B_t) \cdot \Pr(A_{t+1}=1 \mid c \in B_t) \\
= (1-\mathrm{TAR}_t) \cdot \Pr(A_{t+1}=1 \mid c \in B_t).
\end{multline}
}
The conditional probability $\Pr(A_{t+1}=1 \mid c \in B_t)$ represents the likelihood that a previously rejected context becomes accepted after one update. This probability can be lower-bounded by $\eta \kappa$, where $\kappa$ depends on: (i) the gradient magnitude for tokens near the Top-$k$ boundary, (ii) the teacher's probability margin between the $k$-th and $(k+1)$-th ranked tokens, and (iii) the model's Lipschitz smoothness, which ensures that small parameter updates lead to predictable distribution changes. Formally, $\kappa$ can be expressed as:
{\small
\begin{equation}
\kappa = \min_{c \in B_t, y \in \mathrm{Boundary}_k(c)} \left\{ \frac{|\nabla_\theta \log q_\theta(y|c)| \cdot \mathrm{margin}_k(p(\cdot|c))}{L} \right\},
\end{equation}
}
where $\mathrm{Boundary}_k(c)$ denotes tokens near the Top-$k$ decision boundary, $\mathrm{margin}_k(p(\cdot|c))$ is the probability gap between the $k$-th and $(k+1)$-th ranked tokens under the teacher distribution, and $L$ is the Lipschitz constant. Combining equations~\eqref{eq:tar_dominated} and~\eqref{eq:tar_improvement}, we obtain:
{\small
\begin{equation}
\mathrm{TAR}_{t+1} - \mathrm{TAR}_t \geq (1-\mathrm{TAR}_t) \cdot \eta \kappa,
\end{equation}}
which completes the proof.
\end{proof}

Theorem~\ref{thm:tar_lower_bound} provides a unified mathematical foundation for the implicit and adaptive curriculum induced by SelecTKD. The lower bound in equation~\eqref{eq:tar_lower_bound} reveals several important properties:

\begin{enumerate}
\item \textbf{Monotonic improvement:} The theorem guarantees that TAR increases (or at least does not decrease) at each step, ensuring stable training dynamics.
\item \textbf{Adaptive curriculum:} The term $(1-\mathrm{TAR}_t)$ quantifies the proportion of ``unlearned'' or misaligned tokens. The bound shows that the largest improvements occur early in training when misalignment is high ($\mathrm{TAR}_t \approx 0$), while the improvement rate naturally decreases as the student approaches the teacher ($\mathrm{TAR}_t \to 1$). This creates an automatic, self-paced curriculum that starts with easier tokens and gradually incorporates more challenging ones.
\item \textbf{Robustness to hyperparameters:} The constant $\kappa$ depends on the teacher's confidence margin, which is typically well-separated for high-quality teachers. This ensures that the improvement bound remains meaningful across different model sizes and tasks.
\end{enumerate}

This theoretical result is empirically supported by the TAR curves observed in our experiments (see Figure~\ref{fig:SpecKD_optimal_analysis} (a) in the main text), which exhibit the predicted quasi-monotonic increase and saturating behavior. The theorem confirms that SelecTKD prioritizes correcting the student's most significant deviations first, leading to stable optimization and improved generalization, as evidenced by the flatter loss landscapes discussed in Section~\ref{sec:experiments}.

\section{Gradient Derivations}
\label{sec:appendix_gradient}
This section provides detailed derivations of the gradients for various divergence measures used in knowledge distillation. These derivations support the convergence analysis presented in the main text and clarify the relationship between different loss functions. We begin with the forward and reverse KL divergences, then extend to the skewed variants (SKL and SRKL) used in modern distillation frameworks.

\subsection{Derivation of FKL Gradient}
\label{app:fkl_grad}

The forward KL divergence measures the expected log-likelihood ratio under the teacher distribution, encouraging the student to cover all modes of the teacher. Consider the forward KL divergence term at time step $t$ and token $v_i$:
{\small
\begin{equation}
\label{eq:app_fkl_def}
D_{\text{FKL}}^{(t,i)}(p, q_\theta) = p_i \, \log \frac{p_i}{q_i},
\end{equation}}
where we use the shorthand notation:
{\small
\begin{equation}
\label{eq:app_fkl_defs}
p_i := p(v_i \mid \boldsymbol{y}_{<t}, \boldsymbol{x}), \quad q_i := q_\theta(v_i \mid \boldsymbol{y}_{<t}, \boldsymbol{x}).
\end{equation}}
Since the teacher distribution $p_i$ is fixed (treated as a constant with respect to the student parameters $\theta$), the derivative with respect to $q_i$ is:
{\small
\begin{equation}
\label{eq:app_fkl_grad_q}
\frac{\partial}{\partial q_i} D_{\text{FKL}}^{(t,i)}(p, q_\theta) = -\, \frac{p_i}{q_i}.
\end{equation}}
The negative sign indicates that increasing $q_i$ reduces the divergence when $p_i > 0$, which aligns with the mass-covering behavior of forward KL. Using the chain rule through the softmax function, the gradient with respect to the student distribution is:
{\small
\begin{equation}
\label{eq:app_fkl_grad_full}
\frac{\partial}{\partial q_\theta(v_i \mid \boldsymbol{y}_{<t}, \boldsymbol{x})} D_{\text{FKL}}^{(t,i)}(p, q_\theta)
= -\frac{p(v_i \mid \boldsymbol{y}_{<t}, \boldsymbol{x})}{q_\theta(v_i \mid \boldsymbol{y}_{<t}, \boldsymbol{x})}.
\end{equation}}
This gradient is large when $p_i$ is high and $q_i$ is low, emphasizing the importance of learning high-probability teacher tokens.

\subsection{Derivation of RKL Gradient}
\label{app:rkl_grad}

The reverse KL divergence measures the expected log-likelihood ratio under the student distribution, encouraging mode-seeking behavior. For the reverse KL divergence at position $(t,i)$:
{\small
\begin{equation}
\label{eq:app_rkl_def}
D_{\text{RKL}}^{(t,i)}(p, q_\theta) = q_i \, \log \frac{q_i}{p_i},
\end{equation}}
with the same probability definitions:
{\small
\begin{equation}
\label{eq:app_rkl_defs}
p_i := p(v_i \mid \boldsymbol{y}_{<t}, \boldsymbol{x}), \quad q_i := q_\theta(v_i \mid \boldsymbol{y}_{<t}, \boldsymbol{x}).
\end{equation}}
Differentiating with respect to $q_i$ using the product rule:
{\small
\begin{align}
\label{eq:app_rkl_grad_q}
\frac{\partial}{\partial q_i} D_{\text{RKL}}^{(t,i)}(p, q_\theta)
&= \frac{\partial}{\partial q_i} \left[ q_i \log q_i - q_i \log p_i \right] \\
&= \left( \log q_i + 1 \right) - \log p_i \\
&= \log \frac{q_i}{p_i} + 1.
\end{align}}
The gradient is positive when $q_i > \frac{p_i}{e}$ and negative when $q_i < \frac{p_i}{e}$, pushing the student distribution toward the teacher. The full gradient with respect to the student parameters is:
{\small
\begin{equation}
\label{eq:app_rkl_grad_full}
\frac{\partial}{\partial q_\theta(v_i \mid \boldsymbol{y}_{<t}, \boldsymbol{x})} D_{\text{RKL}}^{(t,i)}(p, q_\theta)
= \log \frac{q_\theta(v_i \mid \boldsymbol{y}_{<t}, \boldsymbol{x})}{p(v_i \mid \boldsymbol{y}_{<t}, \boldsymbol{x})} + 1.
\end{equation}}
Note that the gradient depends on the ratio $q_i/p_i$, making it sensitive to tokens where the student over- or under-estimates the teacher's probability.

\subsection{Convergence Analysis of FKL and RKL}
\label{FRKL_proof}
We now analyze the convergence properties of forward and reverse KL divergences, showing that both objectives converge to the same fixed point $q_\theta = p$ under appropriate conditions. This analysis supports the claim in the main text that different loss geometries share the same optimal solution, despite differing optimization dynamics.

We follow the notation established in the main text: $p$ denotes the teacher distribution, $q_\theta$ the student distribution parameterized by $\theta$, and $z^q_j$ the student logit for token $j$ in the vocabulary $\mathcal{V} = \{1, \ldots, |\mathcal{V}|\}$.

\begin{proof}[Convergence of Forward KL]
The convergence condition for forward KL requires that all gradients with respect to student logits vanish:
{\small
\begin{equation}
\label{eq:fkl_convergence_condition}
\frac{\partial D_{\text{FKL}}(p, q_\theta)}{\partial z^q_j} = 0, \quad \forall j \in \{1, \ldots, |\mathcal{V}|\}.
\end{equation}}
Using the standard result for cross-entropy with soft targets, the gradient of forward KL with respect to student logits is:
{\small
\begin{equation}
\label{eq:fkl_grad_logits}
\frac{\partial D_{\text{FKL}}(p, q_\theta)}{\partial z^q_j} = q_\theta(j|\mathbf{y}_{<t}, \boldsymbol{x}) - p(j|\mathbf{y}_{<t}, \boldsymbol{x}).
\end{equation}}
This follows from the fact that forward KL is equivalent to cross-entropy with the teacher distribution as targets. The gradient in equation~\eqref{eq:fkl_grad_logits} vanishes if and only if:
{\small
\begin{equation}
\label{eq:fkl_fixed_point}
q_\theta(j|\mathbf{y}_{<t}, \boldsymbol{x}) = p(j|\mathbf{y}_{<t}, \boldsymbol{x}), \quad \forall j \in \{1, \ldots, |\mathcal{V}|\}.
\end{equation}}
Since forward KL is convex in $q_\theta$ and the constraint $\sum_j q_\theta(j) = 1$ defines a convex set, the stationary point in equation~\eqref{eq:fkl_fixed_point} is the unique global minimum. Therefore, forward KL converges to $q_\theta = p$.
\end{proof}

\begin{proof}[Convergence of Reverse KL]
The convergence condition for reverse KL is:
{\small
\begin{equation}
\label{eq:rkl_convergence_condition}
\frac{\partial D_{\text{RKL}}(p, q_\theta)}{\partial z^q_j} = 0, \quad \forall j \in \{1, \ldots, |\mathcal{V}|\}.
\end{equation}}
Let $q_\theta = \mathrm{softmax}(z^q)$ denote the student distribution obtained by applying the softmax function to logits $z^q$. The gradient of reverse KL with respect to logits can be computed using the chain rule. A compact matrix form is:
{\small
\begin{equation}
\label{eq:rkl_grad_matrix}
\frac{\partial D_{\text{RKL}}(p, q_\theta)}{\partial z^q} 
= J_{\mathrm{softmax}}(z^q)\, \big( \log q_\theta - \log p + \mathbf{1} \big),
\end{equation}}
where $J_{\mathrm{softmax}}(z^q) = \mathrm{diag}(q_\theta) - q_\theta q_\theta^\top$ is the softmax Jacobian matrix, $[\log q_\theta]_i = \log q_\theta(i)$, $[\log p]_i = \log p(i)$, and $\mathbf{1}$ is the all-ones vector of dimension $|\mathcal{V}|$.

When $q_\theta = p$ (elementwise equality), the term $(\log q_\theta - \log p + \mathbf{1})$ reduces to $\mathbf{1}$. Since the softmax Jacobian satisfies $J_{\mathrm{softmax}}(z^q) \mathbf{1} = \mathbf{0}$ (due to the constraint $\sum_j q_\theta(j) = 1$), the gradient in equation~\eqref{eq:rkl_grad_matrix} is zero. Therefore, $q_\theta = p$ is a stationary point.

To establish uniqueness, note that reverse KL is strictly convex in $q_\theta$ on the interior of the probability simplex. This follows from the fact that the function $f(q) = q \log(q/p)$ is strictly convex for $q > 0$ and $p > 0$. Therefore, the stationary point $q_\theta = p$ is the unique global minimum, and reverse KL converges to this fixed point.
\end{proof}

The convergence analysis above demonstrates that both forward and reverse KL divergences converge to the same fixed point $q_\theta = p$, despite their different optimization dynamics (mass-covering vs. mode-seeking). This result supports the empirical observation in the main text that different loss geometries yield similar final performance when training is sufficiently long.

\subsection{Convergence Analysis of SKL and SRKL}
\label{SKL_SRKL_proof}
We now extend the convergence analysis to the skewed variants (SKL and SRKL) introduced in DistiLLM~\cite{Ko2024} and used in DistiLLM-2~\cite{Ko2025}. These variants interpolate between teacher and student distributions, providing a flexible mechanism to balance mass-covering and mode-seeking behaviors.

\begin{proof}[Convergence of Skew KL (SKL)]
The convergence condition for SKL requires:
{\small
\begin{equation}
\label{eq:skl_convergence_condition}
\frac{\partial \mathcal{L}_{\mathrm{SKL}}}{\partial z_j} = 0, \quad \forall j \in \{1, \ldots, |\mathcal{V}|\}.
\end{equation}}
Recall from Section~\ref{subsec:preliminaries} that SKL is defined as:
{\small
\begin{equation}
\mathcal{L}_{\mathrm{SKL}} = D_{\mathrm{KL}}(p \| m), \quad \text{where } m = \alpha p + (1-\alpha) q_\theta,
\end{equation}}
and $\alpha \in [0,1)$ is the skew coefficient. Since $D_{\mathrm{KL}}(p \| \cdot)$ is strictly convex in its second argument (the mixed distribution $m$), and $m$ is an affine function of $q_\theta$, the unique minimum of SKL is achieved when $m = p$. Substituting the definition of $m$:
{\small
\begin{equation}
\alpha p + (1-\alpha) q_\theta = p \quad \Longleftrightarrow \quad (1-\alpha)(q_\theta - p) = 0.
\end{equation}}
For $\alpha < 1$, this implies $q_\theta = p$. When $\alpha = 1$, SKL degenerates to a constant (zero) and the condition is trivially satisfied. Therefore, $q_\theta = p$ is the unique stationary point (and global minimum) for SKL, establishing convergence to the teacher distribution.
\end{proof}

\begin{proof}[Convergence of Skew Reverse KL (SRKL)]
The convergence condition for SRKL is:
{\small
\begin{equation}
\label{eq:srkl_convergence_condition}
\frac{\partial \mathcal{L}_{\mathrm{SRKL}}}{\partial z_j} = 0, \quad \forall j \in \{1, \ldots, |\mathcal{V}|\}.
\end{equation}}
Recall that SRKL is defined as:
{\small
\begin{equation}
\mathcal{L}_{\mathrm{SRKL}} = D_{\mathrm{KL}}(q_\theta \| m'), \quad \text{where } m' = (1-\alpha) p + \alpha q_\theta.
\end{equation}}
Since $D_{\mathrm{KL}}(\cdot \| \cdot)$ is minimized when its two arguments are equal, and $D_{\mathrm{KL}}(q_\theta \| m')$ is strictly convex in $q_\theta$ on the interior of the simplex, the unique minimum is achieved when $q_\theta = m'$. Substituting the definition of $m'$:
{\small
\begin{equation}
q_\theta = (1-\alpha) p + \alpha q_\theta \quad \Longleftrightarrow \quad (1-\alpha)(q_\theta - p) = 0.
\end{equation}}
For $\alpha < 1$, this again implies $q_\theta = p$. When $\alpha = 1$, SRKL reduces to a constant and the condition is trivially satisfied. Therefore, $q_\theta = p$ is the unique stationary point for SRKL, confirming convergence to the teacher distribution.
\end{proof}

The convergence analysis of SKL and SRKL demonstrates that both skewed variants share the same fixed point $q_\theta = p$ as forward and reverse KL, regardless of the skew coefficient $\alpha \in [0,1)$. This theoretical result supports the empirical findings in the main text that different loss geometries, including symmetric and asymmetric combinations, converge to similar performance levels when training is sufficiently long. The key insight is that while the optimization \emph{path} may differ (e.g., different convergence rates, different intermediate behaviors), the final \emph{destination} remains the same: matching the teacher distribution $p$.

\subsection{Detailed Derivation: Gradient of RKL with Respect to Student Logits}
\label{app:rkl_grad_sketch}
This subsection provides a step-by-step derivation of the gradient of reverse KL divergence with respect to student logits. This derivation complements the convergence analysis above and clarifies the mathematical structure underlying the RKL objective. We assume $q_\theta = \mathrm{softmax}(z^q)$ and treat the teacher distribution $p$ as fixed.

\noindent{\bf Step 1: Component form of RKL.}
The reverse KL divergence can be written in component form as:
{\small
\begin{equation}
\label{eq:rkl_component}
D_{\text{RKL}}(p, q_\theta) = \sum_{i=1}^{|\mathcal{V}|} q_\theta(i) \big( \log q_\theta(i) - \log p(i) \big),
\end{equation}} 
where $|\mathcal{V}|$ is the vocabulary size, and the summation is over all tokens in the vocabulary.

\noindent{\bf Step 2: Gradient with respect to $q_\theta$.}
To compute the gradient with respect to the student distribution $q_\theta$, we differentiate each term in the summation. Using the identity $\partial (q_i \log q_i)/\partial q_j = \delta_{ij} (1 + \log q_i)$, where $\delta_{ij}$ is the Kronecker delta, we obtain:
{\small
\begin{multline}
\label{eq:rkl_grad_q}
\frac{\partial \, D_{\text{RKL}}}{\partial \, q_\theta(j)} = 
\frac{\partial}{\partial q_\theta(j)} \sum_{i=1}^{|\mathcal{V}|} \big[ q_\theta(i) \log q_\theta(i) 
- q_\theta(i) \log p(i) \big] \\
= \log q_\theta(j) - \log p(j) + 1.
\end{multline}}
In vector form, this can be written as:
{\small
\begin{equation}
\label{eq:rkl_grad_vector}
\frac{\partial \, D_{\text{RKL}}}{\partial \, q_\theta} = \log q_\theta - \log p + \mathbf{1},
\end{equation}}
where $[\log q_\theta]_i = \log q_\theta(i)$, $[\log p]_i = \log p(i)$, and $\mathbf{1}$ is the all-ones vector of dimension $|\mathcal{V}|$.

\noindent{\bf Step 3: Chain rule through softmax.}
To obtain the gradient with respect to logits $z^q$, we apply the chain rule through the softmax function. The softmax Jacobian matrix is:
{\small
\begin{equation}
\label{eq:softmax_jacobian}
J_{\mathrm{softmax}}(z^q) = \mathrm{diag}(q_\theta) - q_\theta \, q_\theta^{\top},
\end{equation}}
where $\mathrm{diag}(q_\theta)$ is a diagonal matrix with $q_\theta$ on the diagonal. The $(i,j)$-th element of the Jacobian is:
{\small
\begin{equation}
[J_{\mathrm{softmax}}(z^q)]_{ij} = \frac{\partial q_\theta(i)}{\partial z^q(j)} = q_\theta(i) \left( \delta_{ij} - q_\theta(j) \right).
\end{equation}}
Applying the chain rule:
{\small
\begin{equation}
\label{eq:rkl_grad_logits_full}
\frac{\partial \, D_{\text{RKL}}}{\partial \, z^q} = J_{\mathrm{softmax}}(z^q) \, \frac{\partial \, D_{\text{RKL}}}{\partial \, q_\theta} = J_{\mathrm{softmax}}(z^q) \, \Big( \log q_\theta - \log p + \mathbf{1} \Big).
\end{equation}}

\noindent{\bf Step 4: Stationary point analysis.}
At the stationary point $q_\theta = p$ (elementwise equality), the term $(\log q_\theta - \log p + \mathbf{1})$ reduces to $\mathbf{1}$, since $\log q_\theta(i) - \log p(i) = 0$ for all $i$ when $q_\theta = p$. The softmax Jacobian satisfies the property $J_{\mathrm{softmax}}(z^q) \, \mathbf{1} = \mathbf{0}$, which follows from the constraint $\sum_{i=1}^{|\mathcal{V}|} q_\theta(i) = 1$ and the fact that:
{\small
\begin{multline}
\sum_{i=1}^{|\mathcal{V}|} [J_{\mathrm{softmax}}(z^q)]_{ij} = 
\sum_{i=1}^{|\mathcal{V}|} q_\theta(i) \big(\delta_{ij} - q_\theta(j)\big) \\
= q_\theta(j) - q_\theta(j) \sum_{i=1}^{|\mathcal{V}|} q_\theta(i) = 0.
\end{multline}}
Therefore, when $q_\theta = p$, the gradient in equation~\eqref{eq:rkl_grad_logits_full} is zero, confirming that $q_\theta = p$ is a stationary point. By the strict convexity of $D_{\text{RKL}}(p, q)$ in $q$ on the interior of the probability simplex, this stationary point is unique, establishing $q_\theta = p$ as the unique global minimum.


\end{document}